\title{AP: Selective Activation for De-sparsifying Pruned Neural Networks}
\author {
	% Authors
	Shiyu Liu,
	Rohan Ghosh, 
	Dylan Tan,
	Mehul Motani
}
\author{
    %Authors
    % All authors must be in the same font size and format.
    Written by AAAI Press Staff\textsuperscript{\rm 1}\thanks{With help from the AAAI Publications Committee.}\\
    AAAI Style Contributions by Pater Patel Schneider,
    Sunil Issar,\\
    J. Scott Penberthy,
    George Ferguson,
    Hans Guesgen,
    Francisco Cruz\equalcontrib,
    Marc Pujol-Gonzalez\equalcontrib
}
\title{My Publication Title --- Single Author}
\author {
    Author Name
}
\title{My Publication Title --- Multiple Authors}
\author {
    % Authors
    First Author Name,\textsuperscript{\rm 1}
    Second Author Name, \textsuperscript{\rm 2}
    Third Author Name \textsuperscript{\rm 1}
}
\theoremstyle{definition}
\newtheorem{definition}{Definition}
\newtheorem{theorem}{Theorem}
\newtheorem{corollary}{Corollary}
\newtheorem{remark}{Remark}
\patchcmd{\algorithmic}{\addtolength{\ALC@tlm}{\leftmargin} }{\addtolength{\ALC@tlm}{\leftmargin}}{}{}
\newlength\brickwidth
\newlength\brickheight
\newlength\tickstep
\colorlet{slashcolor}{blue!50}
\colorlet{horizcolor}{red!50}
\begin{document}

\maketitle

\begin{abstract}
The rectified linear unit (ReLU) is a highly successful activation function in neural networks as it allows networks to easily obtain sparse representations, which reduces overfitting in overparameterized networks. However, in network pruning, we find that the sparsity introduced by ReLU, which we quantify by a term called dynamic dead neuron rate (DNR), is not beneficial for the pruned network. Interestingly, the more the network is pruned, the smaller the dynamic DNR becomes during optimization. 
This motivates us to propose a method to explicitly reduce the dynamic DNR for the pruned network, i.e., de-sparsify the network. We refer to our method as Activating-while-Pruning (AP). We note that AP does not function as a stand-alone method, as it does not evaluate the importance of weights. Instead, it works in tandem with existing pruning methods and aims to improve their performance by selective activation of nodes to reduce the dynamic DNR. We conduct extensive experiments using popular networks (e.g., ResNet, VGG) via two classical and three state-of-the-art pruning methods. The experimental results on public datasets (e.g., CIFAR-10/100) suggest that AP works well with existing pruning methods and improves the performance by 3\% - 4\%. For larger scale datasets (e.g., ImageNet) and state-of-the-art networks (e.g., vision transformer), we observe an improvement of 2\% - 3\% with AP as opposed to without. Lastly, we conduct an ablation study to examine the effectiveness of the components comprising AP.  
\end{abstract}

\section{Introduction}
The rectified linear unit (ReLU) \cite{glorot2011deep}, $\sigma (x)$ = $\max$\{x, 0\}, is the most widely used activation function in neural networks (e.g., ResNet \cite{resnet18}, Transformer \cite{vaswani2017attention}). The success of ReLU is mainly due to fact that existing networks tend to be overparameterized and ReLU can easily regularize overparameterized networks by introducing sparsity (i.e., post-activation output is zero) \cite{glorot2011deep}, leading to  
promising results in many computer vision tasks (e.g., image classification \cite{vgg16,resnet18}, object detection \cite{dai2021dynamic,joseph2021towards}).

In this paper, we study the ReLU's sparsity constraint in the context of network pruning (i.e., a method of compression that removes weights from the network). Specifically, we question the utility of ReLU's sparsity constraint, when the network is no longer overparameterized during iterative pruning. In the following, we summarize the workflow of our study together with our contributions.

\begin{enumerate}[topsep=0pt]
	\setlength{\itemsep}{0pt}
	\item {\bf Motivation and Theoretical Study. }In Section \ref{sec3.1}, we introduce a term called dynamic Dead Neuron Rate (DNR), which quantifies the sparsity introduced by ReLU neurons that are not completely pruned during iterative pruning. Through rigorous experiments on popular networks (e.g., ResNet \cite{resnet18}), we find that the more the network is pruned, the smaller the dynamic DNR becomes during optimization. This suggests that the sparsity introduced by ReLU is not beneficial for pruned networks. Further theoretical investigations also reveal the importance of reducing dynamic DNR for pruned networks from an information bottleneck (IB) \cite{ib_tishby_dl} perspective (see Sec. \ref{sec3.2}). 
	
	\item {\bf A Method for De-sparsifying Pruned Networks.} In Section \ref{sec3.3}, we propose a method called Activating-while-Pruning (AP) which aims to explicitly reduce dynamic DNR. We note that AP does not function as a stand-alone method, as it does not evaluate the importance of weights. Instead, it works in tandem with existing pruning methods and aims to improve their performance by reducing dynamic DNR. The proposed AP has two variants: (i) AP-Lite which slightly improves the performance of existing methods, but without increasing the algorithm complexity, and (ii) AP-Pro which introduces an addition retraining step to the existing methods in every pruning cycle, but significantly improves the performance of existing methods.
	
	\item {\bf Experiments. }In Section \ref{sec4}, we conduct experiments on CIFAR-10/100 \cite{cifar10} with several popular networks (e.g., ResNet, VGG) using two classical and three state-of-the-art (SOTA) pruning methods. The results demonstrate that AP works well with existing pruning methods and improve their performance by 3\% - 4\%. For the larger scale dataset (e.g., ImageNet \cite{deng2009imagenet}) and SOTA networks (e.g., vision transformer \cite{dosovitskiy2020image}), we observe an improvement of 2\% - 3\% with AP as opposed to without.
	
	%AP also demonstrates promising results, with an improvement of 3 - 5\% over existing methods.
	
	\item {\bf Ablation Study. }In Section \ref{sec4.3}, we carry out an ablation study to investigate and demonstrate the effectiveness of several key components that make up the proposed AP.
\end{enumerate}

\section{Background}
\label{sec2}
\vspace{-1mm}

Network pruning is a method used to reduce the size of the neural network, with its first work \cite{lecun1998gradient} dating back to 1990. In terms of the pruning style, all existing methods can be divided into two classes: (i) {\bf One-Shot Pruning} and (ii) {\bf Iterative Pruning}. Assuming that we plan to prune $Q\%$ of the parameters of a trained network, a typical {\bf pruning cycle} consists of three basic steps:

\begin{enumerate}[topsep=0pt]
	%\vspace{-0.6pc}
	%\setlength{\itemsep}{0.5pt}
	\item Prune $\eta \%$ of existing parameters based on given metrics. 
	
	\item Freeze pruned weights as zero.
	
	\item Retrain the pruned network to recover the performance.
	%\vspace{-2mm}
\end{enumerate}

In One-Shot Pruning, $\eta$ is set to $Q$ and the parameters are pruned in one pruning cycle. While for Iterative Pruning, a much smaller portion of parameters (i.e., $\eta<<Q$) are pruned per pruning cycle. The pruning process is repeated multiple times until $Q\%$ of parameters are pruned. As for performance, Iterative Pruning often results in better performance compared to One-Shot Pruning \cite{han2015learning,frankle2018lottery,li2016pruning}. So far, existing works aim to improve the pruning performance by exploring either new pruning metrics or new retraining methods.

{\bf Pruning Metrics. } Weight magnitude is the most popular approximation metric used to determine less useful connections; the intuition being that smaller magnitude weights have a smaller effect in the output, and hence are less likely to have an impact on the model outcome if pruned \cite{he2020learning,li2020eagleeye,li2020weight}. Many works have investigated the use of weight magnitude as the pruning metric, i.e. \cite{han2015learning,frankle2018lottery}. More recently, \cite{lee2020layer} introduced layer-adaptive magnitude-based pruning (LAMP) and attempts to prune weights based on a scaled version of the magnitude. \cite{park2020lookahead} proposed a method called Lookahead Pruning (LAP), which evaluates the importance of weights based on the impact of pruning on neighbor layers. Another popular metric used for pruning is via the gradient; the intuition being that weights with smaller gradients are less impactful in optimizing the loss function. Examples are \cite{lecun1998gradient,theis2018faster}, where \cite{lecun1998gradient} proposed using the second derivative of the loss function with respect to the parameters (i.e., the Hessian Matrix) as a pruning metric and \cite{theis2018faster} used Fisher information to approximate the Hessian Matrix. A recent work \cite{Blalock2020} reviewed numerous pruning methods and suggested two classical pruning methods for performance evaluation: %(i) {\bf Global Magnitude}: Pruning the weights with the lowest absolute value anywhere in the network and (ii) {\bf Global Gradient}: Pruning the weights with the lowest absolute value of (weight $\times$ gradient) anywhere in the network.
\begin{enumerate}[noitemsep, topsep=0pt]
	\item {\bf Global Magnitude}: Pruning weights with the lowest absolute value anywhere in the network.
	\item {\bf Global Gradient}: Pruning weights with the lowest absolute value of (weight$\times$gradient) anywhere in the network.
\end{enumerate}

{\bf Retraining Methods. } Another factor that significantly affects the pruning performance is the retraining method. According to \cite{han2015learning}, Han et al. trained the unpruned network with a learning rate schedule and retrained the pruned network using a constant learning rate (i.e., often the final learning rate of the learning rate schedule). A recent work \cite{renda2020comparing} proposed learning rate rewinding which used the same learning rate schedule to retrain the pruned network, leading to a better pruning performance. More recently, \cite{S-Cyc} attempted to optimize the choice of learning rate (LR) during retraining and proposed a LR schedule called S-Cyc. They showed that S-Cyc could work well with various pruning methods, further improving the existing performance. Most notably, \cite{frankle2018lottery} found that resetting the unpruned weights to their original values (known as {\bf weight rewinding}) after each pruning cycle could lead to even higher performance than the original model. Some follow-on works \cite{zhou2019deconstructing,renda2020comparing,malach2020proving} investigated this phenomenon more precisely and applied this method in other fields (e.g., transfer learning \cite{mehta2019sparse}, reinforcement learning and natural language processing \cite{yu2019playing}). 

{\bf Other Works. } In addition to works mentioned above, several other works also share some deeper insights on network pruning \cite{liu2018rethinking,zhu2017prune,liu2018darts,wang2020pruning}. For example, \cite{liu2018darts} demonstrated that training from scratch on the right sparse architecture yields better results than pruning from pre-trained models. Similarly, \cite{wang2020pruning} suggested that the fully-trained network could reduce the search space for the pruned structure. More recently, \cite{luo2020neural} addressed the issue of pruning residual connections with limited data and \cite{ye2020good} theoretically proved the existence of small subnetworks with lower loss than the unpruned network.

\section{Activating-while-Pruning}
\label{sec3}
In Section \ref{sec3.1}, we first conduct experiments to evaluate the DNR during iterative pruning. Next, in Section \ref{sec3.2}, we link the experimental results to theoretical studies and motivate Activating-while-Pruning (AP). In Section \ref{sec3.3}, we introduce the idea of AP and present its algorithm. Lastly, in Section \ref{sec3.4}, we illustrate how AP can improve on the performance of existing pruning methods.

\begin{figure*}[!t]
	
	\hspace{0mm}\begin{minipage}{0.5\textwidth}
		\begin{center}
		\includegraphics[width=0.95\linewidth]{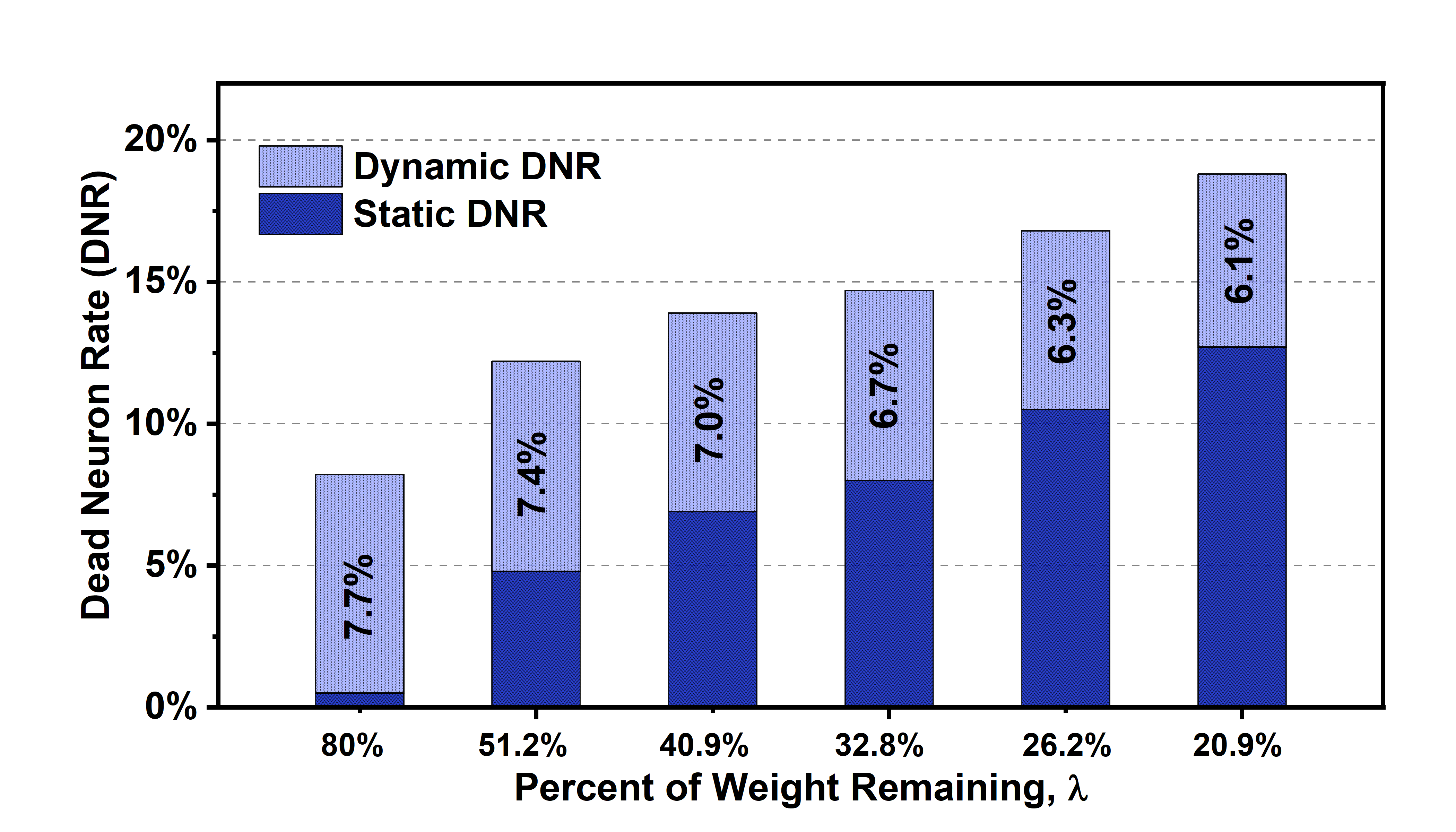}
		\end{center}
	\end{minipage}%
	\hspace{0mm}\begin{minipage}{0.5\textwidth}
		\begin{center}
		\includegraphics[width=0.95\linewidth]{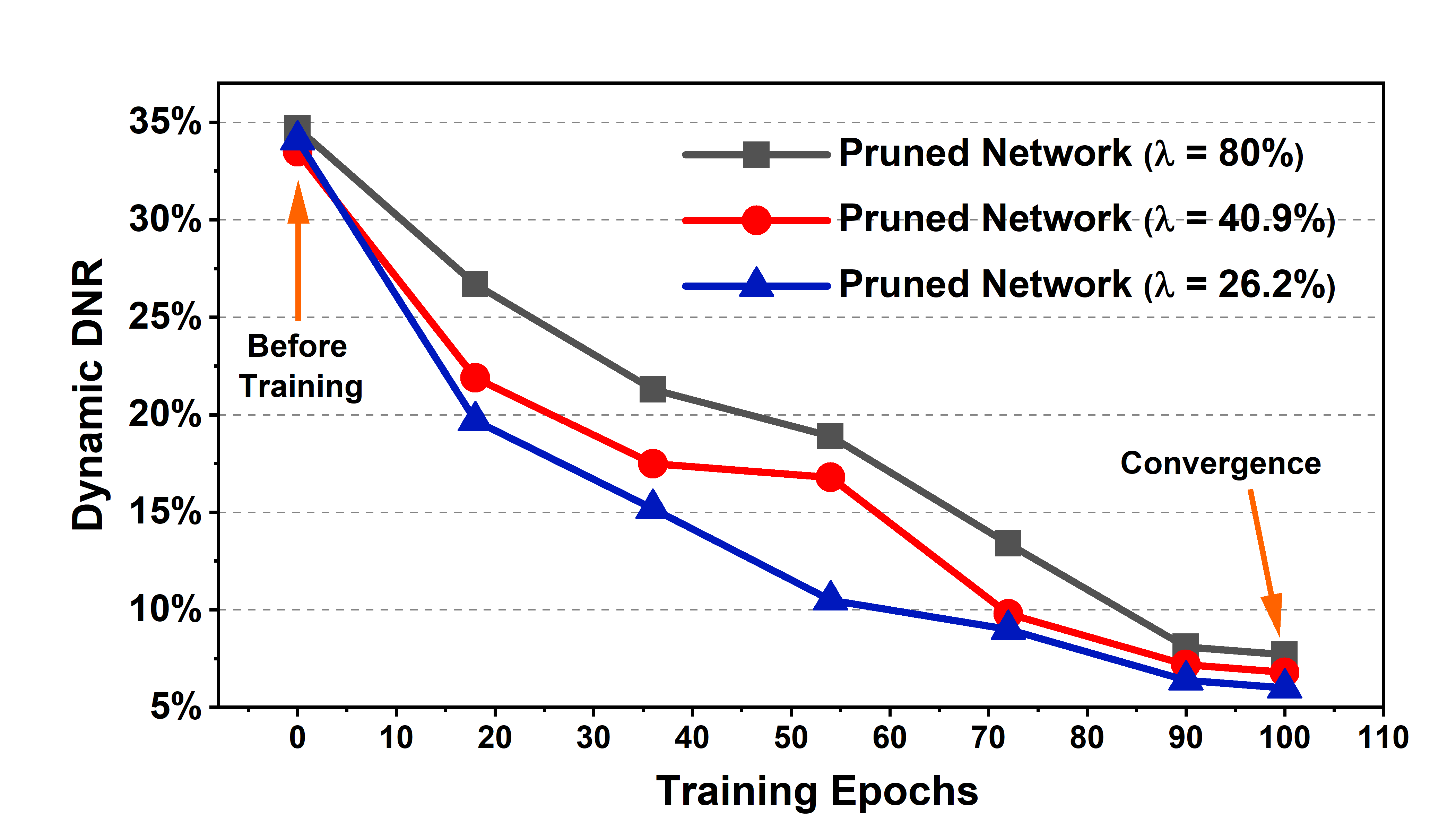}
		\end{center}
	\end{minipage}
    \vspace{-4mm}
	\caption{Dynamic and static Dead Neuron Rate (DNR) when iterative pruning ResNet-20 on CIFAR-10 using Global Magnitude. Left: dynamic and static DNR when the network converges; Right: dynamic DNR during optimization.} 
	\label{fig_1}
	\vspace{-3mm}
\end{figure*}

\subsection{Experiments on DNR}
\label{sec3.1}
We study the state of the ReLU function during iterative pruning and introduce a term called Dead Neuron Rate (DNR), which is the percentage of dead ReLU neurons (i.e., a neuron with a post-ReLU output of zero) in the network averaged over all training samples when the network converges. Mathematically, the DNR can be written as
\begin{equation}
	\resizebox{.43\textwidth}{!} 
	{$\text{DNR} = \frac{1}{n} \sum_{i = 1}^{n}\frac{\text{\# of dead ReLU neurons}}{\text{all neurons in the unpruned network}},$
	}
\end{equation}
where $n$ is the number of training samples. We classify a dead neuron as either dynamically dead or statically dead. The {\bf dynamically dead neuron} is a dead neuron in which not all of the weights have been pruned. Hence, it is not likely to be permanently dead and its state depends on its input. As an example, a neuron can be dead for a sample X, but it could be active (i.e., post-ReLU output $>$ 0) for a sample Y. The DNR contributed by dynamically dead neurons is referred to as {\bf dynamic DNR}. The {\bf statically dead neuron} is a dead neuron in which all associated weights have been pruned. The DNR contributed by statically dead neurons is referred to as {\bf static DNR}.

DNR is a term that we introduce to quantify the sparsity introduced by ReLU. Many similar sparsity metrics have been proposed in the literature \cite{hurley2009comparing}. As an example, the Gini Index \cite{goswami2016sparsity} computed from Lorenz curve (i.e., plot the cumulative percentages of total quantities) can be used to evaluate the sparsity of network graphs. Another popular metric will be Hoyer measure \cite{hoyer2004non} which is the ratio between L1 and L2 norms, can also be used to evaluate the sparsity of networks. The closest metric to DNR is parameter sparsity \cite{goodfellow2016deep} which computes the percentage of zero-magnitude parameters among all parameters. We note that both parameter sparsity and DNR will contribute to sparse representations, and in this paper, we use DNR to quantify the sparsity introduced by ReLU.

{\bf Experiment Setup and Observations. }Given the definition of DNR, static and dynamic DNR, we conduct pruning experiments using ResNet-20 on the CIFAR-10 dataset with the aim of examining the benefit (or lack thereof) of ReLU's sparsity for pruned networks. We iteratively prune ResNet-20 with a pruning rate of 20 (i.e., 20\% of existing weights are pruned) using the Global Magnitude (i.e., prune weights with the smallest magnitude anywhere in the network). We refer to the standard implementation reported in \cite{renda2020comparing,frankle2018lottery} (i.e., SGD optimizer \cite{ruder2016overview}, 100 training epochs and batch size of 128, learning rate warmup to 0.03 and drop by a factor of 10 at 55 and 70 epochs) and compute the static DNR and dynamic DNR while the network is iteratively pruned. The experimental results are shown in Fig. \ref{fig_1}, where we make two observations.

\begin{enumerate}[noitemsep, topsep=0pt]
	\item As shown in Fig. \ref{fig_1} (left), the value of DNR (i.e., sum of static and dynamic DNR) increases as the network is iteratively pruned. As expected, static DNR grows as more weights are pruned during iterative pruning.
	
	\item Surprisingly, dynamic DNR tends to decrease as the network is iteratively pruned (see Fig. \ref{fig_1} (left)), suggesting that pruned networks do not favor the sparsity of ReLU. In Fig. \ref{fig_1} (right), we show that for pruned networks with different $\lambda$ (i.e., percent of remaining weights), they have similar dynamic DNR before training, but the pruned network with smaller $\lambda$ tends to have a smaller dynamic DNR during and after optimization.
\end{enumerate}

{\bf Result Analysis. }One possible reason for the decrease in dynamic DNR could be due to the fact that once the neuron is dead, its gradient becomes zero, meaning that it stops learning and degrades the learning ability of the network \cite{lu2019dying,Stork2020}. This could be beneficial as existing networks tend to be overparameterized and dynamic DNR may help to reduce the occurrence of overfitting. However, for pruned networks whose learning ability are heavily degraded, the dynamic DNR could be harmful as a dead ReLU always outputs the same value (zero as it happens) for any given non-positive input, meaning that it takes no role in discriminating between inputs. Therefore, during retraining, the pruned network attempts to restore its performance by reducing its dynamic DNR so that the extracted information can be passed to the subsequent layers. Similar performance trends can be observed using VGG-19 with Global Gradient (see Fig. \ref{fig_2} in the {\bf Appendix}). Next, we present a {\bf theoretical study} of DNR and show its relevance to the network's ability to discriminate. 

\subsection{Relevance to IB and Complexity}
\label{sec3.2}
Here, we present a theoretical result and subsequent insights that highlight the relevance of the dynamic DNR of a certain layer of the pruned network, to the Information Bottleneck (IB) Principle proposed in \cite{ib_tishby_dl}. In the IB setting, the computational flow is usually denoted by the graph $X\xrightarrow{}T\xrightarrow{}Y$, where $X$ represents the input, $T$ represents the extracted representation, and $Y$ represents the network's output. The IB principle states that the goal of training networks must be to minimize the mutual information \cite{cover2006elements} between $X$ and $T$ (denoted as $I(X;T)$) while keeping $I(Y;T)$ large. Overly compressed features (low $I(X;T)$) will not retain enough information to predict the labels, whereas under-compressed features (high $I(X;T)$) imply that more label-irrelevant information is retained in $T$ which can adversely affect generalization. Next, we provide a few definitions. 
% With this, we now outline our first theoretical result which highlights the relevance of $D_{{\scaleto{DNR}{3pt}}}(T)$ and $S_{{\scaleto{DNR}{3pt}}}(T)$ to $I(X;T)$, as follows. 
% Our objective here is to demonstrate how increase of dynamic DNR of any particular feature layer denoted by $T$, can negatively affect the information that the network can channel through $T$, which in turn can hamper the network's ability to fit the training data labels.

\begin{definition}
	\textbf{Layer-Specific dynamic DNR ($D_{{\scaleto{DNR}{3pt}}}(T)$):} We are given a dataset $S=\{X_1,...,X_m\}$, where $X_i\sim P$ $\forall i$ and $P$ is the data generating distribution. We denote the dynamic DNR of only the neurons at a certain layer within the network, represented by the vector $T$, by $D_{{\scaleto{DNR}{3pt}}}(T)$. $D_{{\scaleto{DNR}{3pt}}}(T)$ is computed over the entire distribution of input in $P$. 
\end{definition}

% We also define the sample-specific extension of $DDNR(T)$, denoted as $\widehat{DDNR}(T)$, which only represents the dynamic DNR of the neurons in $T$ over the given dataset. 
\begin{definition}
	\textbf{Layer-Specific static DNR ($S_{{\scaleto{DNR}{3pt}}}(T)$):} It is defined in the same manner as $D_{{\scaleto{DNR}{3pt}}}(T)$, instead of computing the dynamic DNR, we compute the static DNR of $T$. 
\end{definition}
With this, we now outline our first theoretical result which highlights the relevance of $D_{{\scaleto{DNR}{3pt}}}(T)$ and $S_{{\scaleto{DNR}{3pt}}}(T)$ to $I(X;T)$, as follows. 
\vspace{-1mm}
% The sample-specific extension $\widehat{SDNR}(T)$ is also similarly defined.
\begin{theorem} \label{thm:1}
	We are given the computational flow $X\xrightarrow{}T\xrightarrow{}Y$, where $T$ represents the features at some arbitrary depth within a network, represented with finite precision (e.g. float32 or float64). We only consider the subset of network configurations for which (a) the activations in $T$ are less than a threshold $\tau$ and (b) the zero-activation probability of each neuron in $T$ is upper bounded by some $p_S<1$. Let $dim(T)$ represent the dimensionality of $T$, i.e., the number of neurons at that depth. We then have,
    
    \begin{dmath}
        \begin{split}
		I(X;T)\leq & C\times dim(T)\times
		\Big(1-S_{{\scaleto{DNR}{3pt}}}(T) \\
		& - D_{{\scaleto{DNR}{3pt}}}(T) \Big(1 -  \frac{1}{C}\log{\frac{1-S_{{\scaleto{DNR}{3pt}}}(T)}{D_{{\scaleto{DNR}{3pt}}}(T)}} \Big)\Big), \nonumber
		\end{split}
	\end{dmath}
	
	for a finite constant $C$ that only depends on the network architecture, $\tau$ and $p_S$.
\end{theorem}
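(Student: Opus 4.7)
The plan is to upper-bound $I(X;T)$ by $H(T)$ and then bound $H(T)$ one neuron at a time, combining the per-neuron bounds through Jensen's inequality. Since $T$ is a deterministic function of $X$ in a feed-forward network, $H(T\mid X)=0$, so $I(X;T)=H(T)$. By subadditivity, $H(T)\le \sum_{i=1}^{\dim(T)} H(t_i)$, where $t_i$ denotes the $i$-th coordinate of $T$. Statically dead neurons contribute $0$, so only the $(1-S_{DNR}(T))\cdot \dim(T)$ non-statically-dead neurons need to be accounted for.

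For a non-statically-dead neuron $t_i$, set $p_i := \Pr(t_i=0)$ and note that hypothesis (b), interpreted for the surviving neurons, gives $p_i \le p_S < 1$. Because activations are bounded by $\tau$ and stored in finite precision, $t_i$ takes at most $N$ distinct non-zero values, where $N$ is a finite integer determined by $\tau$ and the numerical format. The entropy of such an $(N+1)$-valued variable with zero-mass $p_i$ is maximized when the non-zero mass is spread uniformly, giving $H(t_i)\le h(p_i)+(1-p_i)\log N$, where $h(\cdot)$ is the binary entropy. The layer-specific bookkeeping implied by Definitions~1 and~2, namely $\sum_{i\text{ non-static}} 1 = \dim(T)(1-S_{DNR}(T))$ and $\sum_{i\text{ non-static}} p_i = \dim(T)\, D_{DNR}(T)$, then yields
\[
H(T) \le \sum_{i \text{ non-static}} h(p_i) \;+\; \log N \cdot \dim(T)\left(1-S_{DNR}(T)-D_{DNR}(T)\right).
\]

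To control the $h(p_i)$ terms, I invoke Jensen's inequality on the concave function $h$:
\[
\sum_{i \text{ non-static}} h(p_i) \le \dim(T)(1-S_{DNR}(T))\, h\!\left(\frac{D_{DNR}(T)}{1-S_{DNR}(T)}\right).
\]
Expanding this binary entropy isolates the term $\dim(T)\, D_{DNR}(T)\log\frac{1-S_{DNR}(T)}{D_{DNR}(T)}$, which is exactly the logarithmic contribution on the right-hand side of the theorem, together with a residual of the form $\dim(T)(1-S_{DNR}(T)-D_{DNR}(T))\log\frac{1-S_{DNR}(T)}{1-S_{DNR}(T)-D_{DNR}(T)}$.

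The main obstacle is folding this residual into a single constant $C$. This is where the hypothesis $p_i \le p_S < 1$ is indispensable: it implies $D_{DNR}(T)/(1-S_{DNR}(T))\le p_S$, and therefore $\log\frac{1-S_{DNR}(T)}{1-S_{DNR}(T)-D_{DNR}(T)}\le \log\frac{1}{1-p_S}$. Choosing $C := \log N + \log\frac{1}{1-p_S}$, a finite constant depending only on the architecture, $\tau$ and $p_S$, absorbs the residual into the $C\cdot \dim(T)(1-S_{DNR}(T)-D_{DNR}(T))$ term, after which a short algebraic rearrangement rewrites the combined bound in the nested form $C\cdot\dim(T)\left(1-S_{DNR}(T)-D_{DNR}(T)\left(1-\frac{1}{C}\log\frac{1-S_{DNR}(T)}{D_{DNR}(T)}\right)\right)$ stated in the theorem.
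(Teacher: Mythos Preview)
Your argument is correct and follows essentially the same route as the paper: reduce $I(X;T)$ to $H(T)$ via determinism and finite precision, drop the statically dead coordinates, bound each surviving $H(t_i)$ by splitting on the zero/non-zero event and using the finite alphabet size, and then aggregate via Jensen. The only cosmetic difference is bookkeeping: the paper folds the $(1-p_i)\log\frac{1}{1-p_i}$ contribution into its implicitly defined constant $C$ (bounded by $\log\frac{N-1}{1-p_S}$) and applies Jensen to $p\mapsto p\log\frac{1}{p}$, whereas you keep that contribution inside the full binary entropy $h(p_i)$, apply Jensen to $h$, and bound the residual afterwards using $p_S$ to arrive at the explicit choice $C=\log N+\log\frac{1}{1-p_S}$.
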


The following corollary addresses the dependencies of Theorem \ref{thm:1}. The proof of Theorem \ref{thm:1} and Corollary \ref{corr:1.1} are provided in the Appendix.
\begin{corollary}\label{corr:1.1}
	The upper bound for $I(X;T)$ in Theorem \ref{thm:1} decreases in response to increase of $D_{{\scaleto{DNR}{3pt}}}(T)$ and $S_{{\scaleto{DNR}{3pt}}}(T)$. 
\end{corollary}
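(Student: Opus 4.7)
The plan is to treat the right-hand side of the bound in Theorem \ref{thm:1} as a smooth two-variable function and verify the claimed monotonicity by direct differentiation. Writing $s = S_{{\scaleto{DNR}{3pt}}}(T)$ and $d = D_{{\scaleto{DNR}{3pt}}}(T)$ and factoring out the positive constant $C\cdot\dim(T)$, the upper bound becomes $C\cdot\dim(T)\cdot g(s,d)$, where
\[
g(s,d) \;=\; 1 - s - d + \frac{d}{C}\log\frac{1-s}{d}.
\]
It therefore suffices to show that both partial derivatives of $g$ are negative in the feasible region ($s,d \geq 0$, $s + d \leq p_S < 1$, with $d$ large enough that Theorem \ref{thm:1} is non-vacuous).

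First I would differentiate with respect to $s$, obtaining $\partial g/\partial s = -1 - d/\big(C(1-s)\big)$, which is strictly negative for every $d \geq 0$ and $s < 1$. Monotonic decrease in $S_{{\scaleto{DNR}{3pt}}}(T)$ therefore follows unconditionally inside the feasible region, with no further assumptions needed.

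The second partial derivative takes the form
\[
\frac{\partial g}{\partial d} \;=\; -\frac{C+1}{C} + \frac{1}{C}\log\frac{1-s}{d},
\]
which is negative iff $d > (1-s)\,e^{-(C+1)}$. Justifying this inequality is the main obstacle, since near $d \to 0^+$ the log term blows up and the partial is actually positive; the corollary only holds once $d$ clears a tiny threshold. I would discharge this by unpacking the constant $C$ from the proof of Theorem \ref{thm:1}: because $T$ is represented at finite precision with activations bounded by $\tau$, the constant $C$ scales at least with the number of bits used to encode each activation, so $e^{-(C+1)}$ is on the order of $2^{-b}$ for $b$-bit precision and is negligible compared with any empirically meaningful dynamic DNR. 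Hence whenever the layer contains even a vanishingly small but non-negligible fraction of dynamically dead neurons, the threshold is cleared and $g$ is strictly decreasing in $d$. Combining the two partial-derivative bounds yields the corollary.
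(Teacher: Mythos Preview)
Your differentiation is correct and coincides with the paper's: both arrive at
\[
\frac{\partial g}{\partial d} \;=\; -\frac{C+1}{C} + \frac{1}{C}\log\frac{1-s}{d},
\]
so the whole question reduces to certifying $C \geq \log\frac{1-s}{d} - 1$, i.e.\ $C \geq \log\frac{1}{D'_{{\scaleto{DNR}{3pt}}}(T)} - 1$ in the paper's notation $D'_{{\scaleto{DNR}{3pt}}}(T)=d/(1-s)$.

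The gap is in how you discharge this. You argue that $C$ ``scales at least with the number of bits'' by unpacking Theorem~\ref{thm:1}, but the proof of Theorem~\ref{thm:1} only furnishes the \emph{upper} bound $C\leq\log\frac{N-1}{1-p_S}$; it gives no lower bound on $C$ at all. Your threshold argument therefore rests on an unproven claim, and the residual qualifier ``empirically meaningful dynamic DNR'' signals that you have not closed the case for all admissible $d$.

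The paper takes a different and sharper route. It exploits that $C$ is by definition a \emph{supremum} over weight configurations satisfying the constraints of Theorem~\ref{thm:1}: for each active node $T_i$ with zero-activation probability $p_i$, it exhibits a concrete modified configuration (flip the signs of the incoming weights and add a small negative bias) under which the quantity defining $C$ is at least $\log\frac{1}{p_i}$. Hence $C\geq\log\frac{1}{p_i}$ for every $i$; since the mean of the $p_i$ equals $D'_{{\scaleto{DNR}{3pt}}}(T)$, some $p_i\leq D'_{{\scaleto{DNR}{3pt}}}(T)$, giving $C\geq\log\frac{1}{D'_{{\scaleto{DNR}{3pt}}}(T)}$. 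This is exactly (in fact slightly more than) the threshold inequality, so $\partial g/\partial d\leq 0$ holds for \emph{every} admissible $d$, not merely those above a precision-dependent cutoff. The constructive lower bound on $C$ is the idea your proposal is missing.
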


\vspace{-1mm}
\begin{remark}
	\textbf{(Relevance to Complexity)} We see that \cite{ib_gen} notes how the metric $I(X;T)$ represents the \textit{effective complexity} of the network. As Theorem 3 in \cite{ib_gen} shows, $I(X;T)$ captures the dependencies between $X$ and $T$ and directly correlates with the network's function fitting ability. Coupled with the observations from Theorem \ref{thm:1} and Corollary \ref{corr:1.1}, for a fixed pruned network configuration (i.e., fixed $S_{{\scaleto{DNR}{3pt}}}(T)$), greater $D_{{\scaleto{DNR}{3pt}}}(T)$ will likely reduce the \textit{effective complexity} of the network, undermining the learning ability of the neural network.
\end{remark}
\vspace{-2mm}
\begin{remark}
	\textbf{(Motivation for AP)} Theorem \ref{thm:1} also shows
	that a pruned network, which possesses large $S_{{\scaleto{DNR}{3pt}}}(T)$, leads to a higher risk of \textit{over-compression} of information (low $I(X;T)$). To address this issue, we can reduce the dynamic DNR (from Corollary \ref{corr:1.1}) so that the upper bound of $I(X;T)$ can be increased, 
	mitigating the issue of \textit{over-compression} for a pruned network.
	This agrees with our initial motivation that the sparsity introduced by ReLU is not beneficial for the pruned network and reducing dynamic DNR helps to improve the learning ability of the pruned network. 
\end{remark}

\subsection{Algorithm of Activating-while-Pruning}
\label{sec3.3}

The experimental and theoretical results above suggest that, in order to better preserve the learning ability of pruned networks, a smaller dynamic DNR is preferred. This motivates us to propose Activating-while-Pruning (AP) which aims to explicitly reduce dynamic DNR.

We note that the proposed AP does not work alone, as it does not evaluate the importance of weights. Instead, it serves as a booster to existing pruning methods and help to improve their pruning performance by reducing dynamic DNR (see Fig. \ref{fig_3}). Assume that the pruning method X removes $p$\% of weights in every pruning cycle (see the upper part in Fig. \ref{fig_3}). After using AP, the overall pruning rate remains unchanged as $p$\%, but $(p-q)$\% of weights are pruned according to the pruning method X with the aim of pruning less important weights, while $q$\% of weights are pruned according to AP (see the lower part in Fig. \ref{fig_3}) with the aim of reducing dynamic DNR. Consider a network $f(\theta)$ with ReLU activation function. Two key steps to reducing dynamic DNR are summarized as follows.

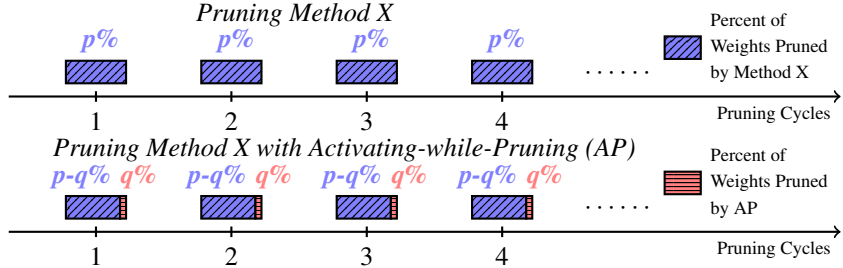
\begin{figure*}[!t]
	\centering
	\begin{tikzpicture}[thick]
		
		\node[anchor=south west] at (3.0,.8) {\it Pruning Method X};
		
		\node[anchor=south west, text = slashcolor] at (1.5,.45) {\it \textbf{p\%}};
		\node[anchor=south west, text = slashcolor] at (3.3,.45) {\it \textbf{p\%}};
		\node[anchor=south west, text = slashcolor] at (5.1,.45) {\it \textbf{p\%}};
		\node[anchor=south west, text = slashcolor] at (6.9,.45) {\it \textbf{p\%}};
		
		\draw[shift={(5.2\tickstep,0.5)}, pattern=north east lines, preaction={fill=slashcolor}] 
		(0,0) rectangle (.6\brickwidth, \brickheight) 
		node [align=flush left, yshift=-.5\brickheight, right, font=\scriptsize] 
		{Percent of \\ Weights Pruned \\ by Method X};
		
		\draw[->] (0.8\brickwidth, 0) -- (6.5\tickstep, 0) node [below left, align=flush right, font=\scriptsize] {Pruning Cycles};
		
		\foreach \i in {1,...,4} {
			\begin{scope}[xshift=\i\tickstep]
				\draw (0, 2pt) -- (0, -2pt) node [below] {\i};
				\path[draw, pattern=north east lines, preaction={fill=slashcolor}] 
				(-.5\brickwidth,5pt) rectangle +(\brickwidth,\brickheight);
			\end{scope}
		};
		\node[inner sep=0pt, anchor=south west] at (4.6\tickstep, 2pt+.5\brickheight) {$\cdots\cdots$};

		\tikzset{yshift=-3.0cm}
		\node[anchor=south west] at (1.1, 2.0) {\it Pruning Method X with Activating-while-Pruning (AP)};
		
		\tikzset{yshift=0.2cm}
		\node[anchor=south west, text = slashcolor] at (1.1, 1.45) {\it \textbf{p-q\%}};
		\node[anchor=south west, text = horizcolor] at (2.0, 1.45) {\it \textbf{q\%}};
		
		\node[anchor=south west, text = slashcolor] at (2.9, 1.45) {\it \textbf{p-q\%}};
		\node[anchor=south west, text = horizcolor] at (3.8, 1.45) {\it \textbf{q\%}};
		
		\node[anchor=south west, text = slashcolor] at (4.7, 1.45) {\it \textbf{p-q\%}};
		\node[anchor=south west, text = horizcolor] at (5.6, 1.45) {\it \textbf{q\%}};
		
		\node[anchor=south west, text = slashcolor] at (6.5, 1.45) {\it \textbf{p-q\%}};
		\node[anchor=south west, text = horizcolor] at (7.4, 1.45) {\it \textbf{q\%}};
		
		\draw[shift={(5.2\tickstep,1.5)}, pattern=horizontal tight lines, preaction={fill=horizcolor}]
		(0,0) rectangle (.6\brickwidth, \brickheight) 
		node [align=flush left, yshift=-.5\brickheight, right, font=\scriptsize] 
		{Percent of \\ Weights Pruned \\ by AP};
		
		\draw[->] (0.8\brickwidth, 1) -- (6.5\tickstep, 1) node [below left, align=flush right, font=\scriptsize] {Pruning Cycles};
		
		\foreach \i/\j in {1/0.9, 2/0.9, 3/0.9, 4/0.9} {
			\begin{scope}[xshift=\i\tickstep, yshift = 1cm]
				\draw (0, 2pt) -- (0, -2pt) node [below] {\i};
				\path[draw, pattern=north east lines, preaction={fill=slashcolor}] 
				(-.5\brickwidth, 5pt) rectangle +(\j\brickwidth,\brickheight);
				\path[draw, pattern=horizontal tight lines, preaction={fill=horizcolor}]
				(-.5\brickwidth, 5pt)+(\j\brickwidth,\brickheight) rectangle +(\brickwidth, 0pt);
			\end{scope}
		};
		\node[inner sep=0pt, anchor=south west, yshift = 1cm] at (4.6\tickstep, 2pt+.5\brickheight) {$\cdots\cdots$};
		
	\end{tikzpicture}
	\vspace*{-4mm}
	\caption{Illustration of how AP works in tandem with existing pruning methods (e.g., method X) in each pruning cycle.}
	\label{fig_3}
	\vspace{-3  mm}
\end{figure*}

{\bf (1) Locate Dead ReLU Neurons. } Consider a neuron in the hidden layer with ReLU activation function, taking $n$ inputs \{${X_1W_1, ..., X_nW_n} | X_i \in \mathbb{R}$ is the input and $W_i \in \mathbb{R}$ is the associated weight\}. Let $j$ be the pre-activated output of the neuron (i.e., $j = \sum_{i = 1}^{n} X_iW_i$) and $\mathcal{J}$ be the post-activated output of the neuron ($\mathcal{J} = ReLU (j)$). Let $\mathcal{L}$ be the loss function and assume the neuron is dead ($\mathcal{J} = 0$), then the gradient of its associated weights (e.g., $W_1$) with respected to the loss function will be $\frac{\partial \mathcal{L}}{\partial W_1} = \frac{\partial \mathcal{L}}{\partial \mathcal{J}} \cdot \frac{\partial \mathcal{J}}{\partial j}\cdot \frac{\partial j}{\partial W_1} = 0$ as $\frac{\partial \mathcal{J}}{\partial j} = 0$. If a neuron is often dead during training, the weight movement of its associated weights is likely to be smaller than other neurons. Therefore, we compute the difference between weights at initialization ($\theta_0$) and the weights when the network convergences ($\theta_*$), i.e., $|\theta_* - \theta_0|$ and use it as a heuristic to locate dead ReLU neurons.

{\bf (2) Activate Dead ReLU Neurons.} Assume we have located a dead neuron in the hidden layer with $n$ inputs \{${X_1W_1, ..., X_nW_n}$ $| X_i \in \mathbb{R}$ is the input and $W_i \in \mathbb{R}$ is the associated weight\}. We note that $X_i$ is non-negative as $X_i$ is usually the post-activated output from the previous layer (i.e., the output of ReLU is non-negative). Therefore, a straightforward way to activate the dead neuron is to prune the weights with the negative value. By pruning such negative weights, we can increase the value of the pre-activation output, which may turn the the pre-activation output into positive so as to reduce dynamic DNR.

\subsection{How AP Improves Existing Methods}
\label{sec3.4}

We now summarize how AP can improve existing pruning methods in Algorithm \ref{algorithm2}, where the upper part is the algorithm of a standard iterative pruning method called pruning method X and the lower part is the algorithm of method X with AP. The proposed AP has two variants: {\bf AP-Pro} and {\bf AP-Lite}. We note that both AP-Pro and AP-Lite contain the same three steps, summarized as follows. 

\begin{enumerate}[noitemsep, topsep=0pt]
	%\vspace{-0.8pc}
	%\setlength{\itemsep}{3pt}
	\item {\bf Pruning.} Given a network at convergence with a set of dynamically dead ReLU neurons, $\mathcal{N}_1 = \{n_1, n_2, n_3,...\}$. The pruning step of AP aims to activate these dynamically dead ReLU neurons (i.e., reduce dynamic DNR) so as to preserve the learning ability of the pruned network (see the pruning metric of AP in algorithm \ref{algorithm1}).
	
	\item {\bf Weight Rewinding.} Resetting unpruned weights to their values at the initialization. We note that different weight initializations could lead to different sets of $\mathcal{N}$. In the previous step, AP aims to reduce dynamic DNR for the target $\mathcal{N}_1$ and weight rewinding attempts to prevent the target $\mathcal{N}_1$ from changing too much. Since the weights of ReLU neurons in $\mathcal{N}_1$ have been pruned by AP, these neurons could become active during retraining. The effect of weight rewinding is evaluated via an ablation study.
	
	\item {\bf Retraining} the pruned network to recover performance.
	%\vspace{-2mm}
\end{enumerate}

The key difference between AP-Lite and AP-Pro is that AP-Lite applies these three steps only once at the end of pruning (i.e., when all pruning cycles ends). It aims to slightly improve the performance, but does not substantially increase the algorithm complexity. For AP-Pro, it applies the three steps above in every pruning cycle, which increases the algorithm complexity (mainly due to the retraining step), but aims to significantly improve the performance, which could be preferred in performance oriented tasks.

\begin{algorithm}[!t]
	\setlength{\textfloatsep}{0pt}
	\algsetup{linenosize=\footnotesize}
	\footnotesize
	\caption{\footnotesize The Pruning Metric of the Proposed AP} 
	\begin{algorithmic}[1]
		\REQUIRE (i) Network $f$ with unpruned weights $\theta_0$ at initialization, $f(\theta_0)$; (ii) Network $f$ with unpruned weights $\theta_*$ at convergence,  $f(\theta_*)$; (iii) Pruning Rate of AP, $q$; \\
		\STATE {\bf Locate Dead Neurons}: Sort $\vert \theta_*$ - $\theta_0 \vert $ in an ascending order.
		\STATE {\bf Activate Dead Neurons}: In the ascending order of $\vert \theta_*$ - $\theta_0 \vert $, prune first $q\%$ of weights with the negative magnitude.
	\end{algorithmic}
	\label{algorithm1}
\end{algorithm}

\setlength{\textfloatsep}{6pt}
\begin{algorithm}[!t]
	\algsetup{linenosize=\footnotesize}
	\footnotesize
	\caption{\footnotesize The Pruning Method X with and without AP } 
	\label{algorithm2}
	\begin{algorithmic}[1]
		\REQUIRE (i) Network, $f(\theta)$; (ii) Pruning Rate of Method X, $p$; (iii) Pruning Rate of AP, $q$; (iv) Pro\_Flag = \{0: \textcolor{blue}{AP-Lite}, 1: \textcolor{red}{AP-Pro}\}; (v) Pruning Cycles, $n$; \\
		------------------------ {\bf Pruning Method X} --------------------------
		\FOR {$i = 1$ to $n$}
		\STATE Randomly initialize unpruned weights, $\theta \leftarrow \theta_0$.
		\STATE Train the network to convergence, arriving at parameters $\theta_*$.
		\STATE Prune $p$ \% of $\theta_*$ according to the pruning method X.
		\ENDFOR
		\STATE Retrain: Retrain the network to recover its performance. \\
		------------ {\bf Pruning Method X with Proposed AP} ------------    
		\FOR {$i = 1$ to $n$}
		\STATE Randomly initialize unpruned weights, $\theta \leftarrow \theta_0$.
		\STATE Train the network to convergence, arriving at parameters $\theta_*$.
		\STATE Prune ($p$ - $q$) \% of  $\theta_*$ according to the pruning method X.
		\IF [\textcolor{red}{Execution of AP-Pro}] {Pro\_Flag}  
		\STATE (i) \textcolor{red}{\it Pruning: Prune $q$ \% of parameter $\theta_*$ according to the metric of AP (see details in Algorithm \ref{algorithm1}).} 
		\STATE (ii) \textcolor{red}{\it Weight Rewinding: Reset the remaining parameters to their values in $\theta_0$.}
		\STATE (iii) \textcolor{red}{\it Retrain: Retrain the pruned network to recover its performance.}
		\ENDIF
		\ENDFOR
		\IF [\textcolor{blue}{Execution of AP-Lite}] {NOT Pro\_Flag}
		\STATE (i) \textcolor{blue}{\it Pruning: Prune $q$ \% of the parameters  $\theta_*$ according to the metric of AP (see details in Algorithm \ref{algorithm1}).} 
		\STATE (ii) \textcolor{blue}{\it Weight Rewinding: Reset the remaining parameters to their values in $\theta_0$.}
		\STATE (iii) \textcolor{blue}{\it Retrain: Retrain the pruned network to recover its performance.}
		\ENDIF
	\end{algorithmic}
\end{algorithm}

\vspace{-2mm}
\section{Performance Evaluation}
\label{sec4}

%In this section, we evaluate the performance of two classical pruning methods and three SOTA pruning methods, with and without AP. We first summarize the experiment setup in Section \ref{sec4.1}. Next, in Section \ref{sec4.2}, we compare and analyze the results obtained. Lastly, in Section \ref{sec4.3}, we conduct an ablation study to evaluate the effectiveness of several components in AP.

\subsection{Experiment Setup}
\label{sec4.1}

{\bf (1) Experiment Details.} To demonstrate that AP can work well with different pruning methods, we shortlist two classical and three state-of-the-art pruning methods. The details for each experiment are summarized as follows.

\begin{enumerate}
%[noitemsep, topsep=0pt]
	%\vspace{-0.6pc}
	%\setlength{\itemsep}{0pt}
	
	\item Pruning ResNet-20 on the CIFAR-10 dataset using Global Magnitude with and without AP. 
	
	\item Pruning VGG-19 on the CIFAR-10 dataset using Global Gradient with and without AP.
	
	\item Pruning DenseNet-40 \cite{huang2017densely} on CIFAR-100 using Layer-Adaptive Magnitude-based Pruning (LAMP) \cite{lee2020layer} with and without AP.
	
	\item Pruning MobileNetV2 \cite{sandler2018mobilenetv2} on the CIFAR-100 dataset using Lookahead Pruning (LAP) \cite{park2020lookahead} with and without AP.
	
	\item Pruning ResNet-50 \cite{resnet18} on the ImageNet (i.e., ImageNet-1000) using Iterative Magnitude Pruning (IMP) \cite{frankle2018lottery} with and without AP.
	
	\item Pruning Vision Transformer (ViT-B-16) on CIFAR-10 using IMP with and without AP. 
\end{enumerate}

%In each run, the dataset is randomly split into three parts: training dataset (60\%), validation dataset (20\%) and testing dataset (20\%). 

We train the network using SGD with momentum = 0.9 and a weight decay of 1$\texttt{e}$-4 (same as \cite{renda2020comparing,frankle2018lottery}). For the benchmark pruning method, we prune the network with a pruning rate $p$ = 20 (i.e., 20\% of existing weights are pruned) in 1 pruning cycle. After using AP, the overall pruning rate remains unchanged as 20\%, but 2\% of existing weights are pruned based on AP, while the other 18\% of existing weights are pruned based on the benchmark pruning method to be compared with (see Algorithm \ref{algorithm2}). We repeat 25 pruning cycles in 1 run and use the early-stop top-1 test accuracy (i.e., the corresponding test accuracy when early stopping criteria for validation error is met) to evaluate the performance. The experimental results averaged over 5 runs and the corresponding standard deviation are summarized in Tables \ref{per_1} - \ref{per_6}, where $\lambda$ is the percentage of weights remaining.

\begin{table}[!t]
	\centering
	\renewcommand{\arraystretch}{1.0}
	\setlength\tabcolsep{6.5pt}
	\begin{tabular}{|c|cccc|}
		\toprule
		\toprule
		\multicolumn{5}{|c|}{Original Top-1 Test Accuracy: 91.7\% ($\lambda$ = 100\%)} \\ \toprule
		{\footnotesize $\lambda$} & {\footnotesize 32.8\%} & {\footnotesize 26.2\%} & {\footnotesize 13.4\%} & {\footnotesize 5.72\%} \\ \toprule
		Glob Mag  & 90.3$\pm${\scriptsize 0.4} & 89.8$\pm${\scriptsize 0.6} & 88.2$\pm${\scriptsize 0.7} & 81.2$\pm${\scriptsize 1.1} \\
		AP-Lite & 90.4$\pm${\scriptsize 0.7} & 90.2$\pm${\scriptsize 0.8} & 88.7$\pm${\scriptsize 0.7} & 82.4$\pm${\scriptsize 1.4} \\
		AP-Pro & \textbf{90.7$\pm${\scriptsize 0.6}} & \textbf{90.4$\pm${\scriptsize 0.4}} & \textbf{89.3$\pm${\scriptsize 0.8}} & \textbf{84.1$\pm${\scriptsize 1.1}} \\ 
		\bottomrule
		\bottomrule
	\end{tabular}
	\vspace{-2.3mm}
	\caption{Performance (top-1 test accuracy $\pm$ standard deviation) of pruning ResNet-20 on CIFAR-10 using Global Magnitude (Glob Mag) with and without AP.}
	\label{per_1}
	\vspace{-0.3mm}
\end{table}

\begin{table}[!t]
	\centering
	\renewcommand{\arraystretch}{1.0}
	\setlength\tabcolsep{6.5pt}
	\begin{tabular}{|c|cccc|}
		\toprule
		\toprule
		\multicolumn{5}{|c|}{Original Top-1 Test Accuracy: 92.2\% ($\lambda$ = 100\%)} \\ \toprule
		{\footnotesize $\lambda$} & {\footnotesize 32.8\%} & {\footnotesize 26.2\%} & {\footnotesize 13.4\%} & {\footnotesize 5.72\%} \\ \toprule
		Glob Grad & 90.2$\pm${\scriptsize 0.5}  & 89.8$\pm${\scriptsize 0.8} & 89.2$\pm${\scriptsize 0.8} & 76.9$\pm${\scriptsize 1.1} \\
		AP-Lite & 90.5$\pm${\scriptsize 0.8} & 90.3$\pm${\scriptsize 0.7} & 89.7$\pm${\scriptsize 0.9} & 78.4$\pm${\scriptsize 1.4} \\
		AP-Pro & \textbf{90.8$\pm${\scriptsize 0.6}} & \textbf{90.7$\pm${\scriptsize 0.9}}& \textbf{90.4$\pm${\scriptsize 0.8}} & \textbf{79.2$\pm${\scriptsize 1.3}}  \\ 
		\bottomrule
		\bottomrule
	\end{tabular}
	\vspace{-2.3mm}
	\caption{Performance (top-1 test accuracy $\pm$ standard deviation) of pruning VGG-19 on CIFAR-10 using Global Gradient (Glob Grad) with and without AP. }
	\label{per_2}
\end{table}

{\bf (2) Hyper-parameter Selection and Tuning. }To ensure fair comparison against prior results, we utilize standard implementations (i.e., network hyper-parameters and learning rate schedules) reported in the literature. Specifically, the implementations for Tables \ref{per_1} - \ref{per_6} are from  \cite{frankle2018lottery}, \cite{zhao2019variational},  \cite{chin2020towards}, \cite{renda2020comparing} and \cite{dosovitskiy2020image}. The implementation details can be found in Section \ref{A2_new} of the {\bf Appendix}. In addition, we also tune hyper-parameters using the validation dataset via grid search. Some hyper-parameters are tuned as follows. (i) The training batch size is tuned from \{64, 128, ...., 1024\}. (ii) The learning rate is tuned from 1e-3 to 1e-1 via a stepsize of 2e-3. (iii) The number training epochs is tuned from 80 to 500 with a stepsize of 20. The validation performance using our tuned parameters are close to that of using standard implementations. Therefore, we use standard implementations reported in the literature to reproduce benchmark results.

{\bf (3) Reproducing Benchmark Results. } By using the implementations reported in the literature, we have correctly reproduced the benchmark results. For example, the benchmark results in our Tables \ref{per_1} - \ref{per_6} are comparable to Fig.11 and Fig.9 of \cite{Blalock2020}, Table.4 in \cite{liu2018rethinking},  Fig.3 in \cite{chin2020towards}, Fig. 10 in \cite{frankle2020linear}, Table 5 in \cite{dosovitskiy2020image}, respectively.

%Moreover, the performance of the unpruned network is also an important indicator. As an example, the performance of the unpruned network in our Tables \ref{per_3}, \ref{per_4}, \ref{per_5} and \ref{per_6} are comparable to those reported in \cite{zhao2019variational,chin2020towards,renda2020comparing,dosovitskiy2020image}, i.e., compare our Tables \ref{per_3}, \ref{per_4}, \ref{per_5} and \ref{per_6} to the Table 2 in \cite{zhao2019variational}, Fig.3 in \cite{chin2020towards}, Table 3 in \cite{zhao2019variational}, Table 5 in \cite{dosovitskiy2020image}, respectively.

\textbf{(4) Source Code \& Devices:} We use Tesla V100 devices for our experiments and the source code (including random seeds) will be released at the camera-ready stage.

\begin{table}[!t]
	\centering
	\renewcommand{\arraystretch}{1.0}
	\setlength\tabcolsep{6.5pt}
	\begin{tabular}{|c|cccc|}
		\toprule
		\toprule
		\multicolumn{5}{|c|}{Original Top-1 Test Accuracy: 74.6\% ($\lambda$ = 100\%)} \\ \toprule
		{\footnotesize $\lambda$} & {\footnotesize 32.8\%} & {\footnotesize 26.2\%} & {\footnotesize 13.4\%} & {\footnotesize 5.72\%} \\ \toprule
		LAMP  & 71.5$\pm${\scriptsize 0.7} & 69.6$\pm${\scriptsize 0.8} & 65.8$\pm${\scriptsize 0.9} & 61.2$\pm${\scriptsize 1.4} \\
		AP-Lite & 71.9$\pm${\scriptsize 0.8} & 70.3$\pm${\scriptsize 0.7} & 66.6$\pm${\scriptsize 0.7} & 62.2$\pm${\scriptsize 1.2} \\
		AP-Pro & \textbf{72.2$\pm${\scriptsize 0.7}}& \textbf{71.1$\pm${\scriptsize 0.7}} & \textbf{68.8$\pm${\scriptsize 0.9}} & \textbf{63.5$\pm${\scriptsize 1.5}}  \\ 
		\bottomrule
		\bottomrule
	\end{tabular}
	\vspace{-2.3mm}
	\caption{Performance (top-1 test accuracy $\pm$ standard deviation) of pruning DenseNet-40 on CIFAR-100 using Layer-Adaptive Magnitude Pruning (LAMP) with/without AP.}
	\label{per_3}
	\vspace{-0.8mm}
\end{table}

\begin{table}[!t]
	\centering
	\renewcommand{\arraystretch}{1.0}
	\setlength\tabcolsep{6.5pt}
	\begin{tabular}{|c|cccc|}
		\toprule
		\toprule
		\multicolumn{5}{|c|}{Original Top-1 Test Accuracy: 73.7\% ($\lambda$ = 100\%)} \\ \toprule
		{\footnotesize $\lambda$} & {\footnotesize 32.8\%} & {\footnotesize 26.2\%} & {\footnotesize 13.4\%} & {\footnotesize 5.72\%} \\ \toprule
		LAP  & 72.1$\pm${\scriptsize 0.8} & 70.5$\pm${\scriptsize 0.9} & 67.3$\pm${\scriptsize 0.8} & 64.8$\pm${\scriptsize 1.5} \\
		AP-Lite & 72.5$\pm${\scriptsize 0.9} & 70.9$\pm${\scriptsize 0.8} & 68.2$\pm${\scriptsize 1.2} & 66.2$\pm${\scriptsize 1.5} \\
		AP-Pro & \textbf{72.8$\pm${\scriptsize 0.7}} & \textbf{71.4$\pm${\scriptsize 0.8}} & \textbf{69.1$\pm${\scriptsize 0.8}} & \textbf{67.4$\pm${\scriptsize 1.1}}  \\ 
		\bottomrule
		\bottomrule
	\end{tabular}
	\vspace{-2mm}
	\caption{Performance (top-1 test accuracy $\pm$ standard deviation) of pruning MobileNetV2 on CIFAR-100 using Lookahead Pruning (LAP) with and without AP.}
	\label{per_4}
\end{table}

\subsection{Performance Comparison}
\label{sec4.2}

{\bf (1) Performance using SOTA and Classical Pruning Methods. } In Tables \ref{per_3} and \ref{per_4}, we show that AP can work well with SOTA pruning methods (e.g., LAMP, LAP). In Table \ref{per_3}, we show the performance of AP using LAMP via DenseNet-40 on CIFAR-100. We observe that AP-Lite improves the performance of LAMP by 1.2\% at $\lambda$ = 13.4\% and the improvement increases to 1.6\% at $\lambda$ = 5.7\%. Note that AP-Lite does not increase the algorithm complexity of existing methods. For AP-Pro, it causes a larger improvement of 4.6\% and 3.8\% at $\lambda$ = 13.4\% and $\lambda$ = 5.7\%, respectively. Similar performance trends can be observed in Table \ref{per_4}, where we show the performance of AP using LAP via MobileNetV2 on CIFAR-100. Furthermore, similar performance improvements can be observed using classical pruning methods (Global Magnitude/Gradient) via ResNet-20/VGG-19 on CIFAR-10 as well (see Tables \ref{per_1} - \ref{per_2}). 

{\bf (2) Performance on ImageNet. } In Table \ref{per_5}, we show the performance of AP using Iterative Magnitude Pruning (IMP, i.e., the lottery ticket hypothesis pruning method) via ResNet-50 on ImageNet (i.e., the ILSVRC version) which contains over 1.2 million images from 1000 different classes. We observe that AP-Lite improves the performance of IMP by 1.5\% at $\lambda = 5.7\%$. For AP-Pro, it improves the performance of IMP by 2.8\% at $\lambda = 5.7\%$.

{\bf (3) Performance on SOTA networks (Vision Transformer).} Several recent works \cite{liu2021swin,yuan2021volo,chen2021crossvit} demonstrated that transformer based networks tend to provide excellent performance in computer vision tasks (e.g., classification). We now examine the performance of AP using Vision Transformer (i.e., ViT-B16 with a resolution of 384). We note that the ViT-B16 uses Gaussian Error Linear Units (GELU, GELU(x) = x$\Phi(x)$, where $\Phi(x)$ is the standard Gaussian cumulative distribution function) as the activation function. Similar to ReLU which blocks the negative pre-activation output, GELU heavily regularizes the negative pre-activation output by multiplying an extremely small value of $\Phi(x)$, suggesting that AP could be helpful with pruning GELU based models as well.

We repeat the same experiment setup as above and evaluate the performance of AP using ViT-B16 in Table \ref{per_6}. We observe that AP-Lite helps to improve the performance of IMP by 1.8\% at $\lambda = 5.7\%$. For AP-Pro, it improves the performance of IMP by 3.3\% at $\lambda = 5.7\%$.

\begin{table}[!t]
	\centering
	\renewcommand{\arraystretch}{1.0}
	\setlength\tabcolsep{6.5pt}
	\begin{tabular}{|c|cccc|}
		\toprule
		\toprule
		\multicolumn{5}{|c|}{Original Top-1 Test Accuracy: 77.0\% ($\lambda$ = 100\%)} \\ \toprule
		{\footnotesize $\lambda$} & {\footnotesize 32.8\%} & {\footnotesize 26.2\%} & {\footnotesize 13.4\%} & {\footnotesize 5.72\%} \\ \toprule
		IMP  & 76.8$\pm${\scriptsize 0.2} & 76.4$\pm${\scriptsize 0.3} & 75.2$\pm${\scriptsize 0.4} & 71.5$\pm${\scriptsize 0.4} \\
		AP-Lite & 77.2$\pm${\scriptsize 0.3} & 76.9$\pm${\scriptsize 0.4} & 76.1$\pm${\scriptsize 0.3} & 72.6$\pm${\scriptsize 0.5} \\
		AP-Pro & \textbf{77.5$\pm${\scriptsize 0.4}} & \textbf{77.2$\pm${\scriptsize 0.3}} & \textbf{76.8$\pm${\scriptsize 0.6}} & \textbf{73.5$\pm${\scriptsize 0.4}}  \\ 
		\bottomrule
		\bottomrule
	\end{tabular}
	\vspace{-2mm}
	\caption{Performance (top-1 validation accuracy $\pm$ standard deviation) of pruning ResNet-50 on ImageNet using Iterative Magnitude Pruning (IMP) with and without AP.}
	\label{per_5}
\end{table}

\begin{table}[!t]
	\centering
	\renewcommand{\arraystretch}{1.0}
	\setlength\tabcolsep{6.5pt}
	\begin{tabular}{|c|cccc|}
		\toprule
		\toprule
		\multicolumn{5}{|c|}{Original Top-1 Test Accuracy: 98.0\% ($\lambda$ = 100\%)} \\ \toprule
		{\footnotesize $\lambda$} & {\footnotesize 32.8\%} & {\footnotesize 26.2\%} & {\footnotesize 13.4\%} & {\footnotesize 5.72\%} \\ \toprule
		IMP  & 97.3$\pm${\scriptsize 0.6} & 96.8$\pm${\scriptsize 0.7}  & 88.1$\pm${\scriptsize 0.9} & 82.1$\pm${\scriptsize 0.9} \\
		AP-Lite & 98.0$\pm${\scriptsize 0.4} & 97.3$\pm${\scriptsize 0.7} & 89.9$\pm${\scriptsize 0.6} & 83.6$\pm${\scriptsize 0.8} \\
		AP-Pro & \textbf{98.2$\pm${\scriptsize 0.6}} & \textbf{97.6$\pm${\scriptsize 0.5}} & \textbf{91.1$\pm${\scriptsize 0.8}}  & \textbf{84.8$\pm${\scriptsize 1.0}}  \\ 
		\bottomrule
		\bottomrule
	\end{tabular}
	\vspace{-2mm}
	\caption{Performance (top-1 test accuracy $\pm$ standard deviation) of pruning Vision Transformer (ViT-B-16) on CIFAR-10 using IMP with and without AP.}
	\label{per_6}
\end{table}

\subsection{Ablation Study}
\label{sec4.3}

We now conduct an ablation study to evaluate the effectiveness of components in AP. Specifically, we remove one component at a time in AP and observe the impact on the pruning performance. 
We construct several variants of AP as follows.

\begin{enumerate}
	\setlength{\itemsep}{3pt}
	\item {\bf AP-Lite-NO-WR}: Using AP-Lite without the weight rewinding step (i.e., remove step (ii) from AP-Lite in Algorithm \ref{algorithm2}). This aims to evaluate the effect of weight rewinding on the pruning performance.
	
	\item {\bf AP-Lite-SOLO}: Using only AP-Lite without the benchmark pruning method (i.e., in every pruning cycle, pruning weights only based on AP). This aims to evaluate if the pruning metric of AP can be used to evaluate the importance of weights. 
\end{enumerate}

In Table \ref{ab_1}, we conduct experiments of Pruning ResNet-20 on the CIFAR-10 dataset using Global Magnitude. Based on this configuration, we compare the performance of AP-Lite-NO-WR, AP-Lite-SOLO to AP-Lite so as to demonstrate the effectiveness of components in AP. We note that, same as above, we utilize the implementation reported in the literature. Specifically, the hyper-parameters and the learning rate schedule are from \cite{frankle2018lottery}.

{\bf Effect of Weight Rewinding. } In Table \ref{ab_1}, we compare the performance of AP-Lite-NO-WR to AP-Lite while the key difference is that AP-Lite utilizes weight rewinding (see Algorithm \ref{algorithm2}) and AP-Lite-NO-WR does not. We find that the performance of AP-Lite at $\lambda$ = 51.2\% is 2.4\% higher than AP-Lite-NO-WR. It suggests the crucial role of weight rewinding in improving the performance. Similar performance trends can be observed for other values of $\lambda$. 

%and for Table \ref{ab_2}, where we prune VGG-19 on CIFAR-10 (see Appendix).

{\bf When AP Works Solely. } The pruning metric of AP (see algorithm \ref{algorithm1}) aims to reduce dynamic DNR by pruning. We compare the performance of AP-Lite-SOLO to AP-Lite to evaluate if the pruning metric of AP can be used solely, without working with other pruning methods. In Table \ref{ab_1}, we observe that AP-Lite-SOLO performs much worse than AP-Lite. For example, at $\lambda = 51.2\%$, the performance of AP-Lite-SOLO is 87.1, which is 2.8\% lower than AP-Lite. Similar performance trends can be observed in Table \ref{ab_2} (see {\bf Appendix}), where we prune VGG-19 using CIFAR-10. It suggests that the pruning metric of AP is not suitable to evaluate the importance of weights. The effect of AP's metric on reducing dynamic DNR and its pruning rate $q$ on pruning performance are discussed in the Reflections below.

%is discussed in Reflections. Furthermore, AP reduces dynamic DNR by pruning based on a pruning rate of $q$. The effect of $q$ on performance is also discussed in Reflections.

\begin{table}[!t]
	\centering
	\setlength\tabcolsep{9.0pt}
	\begin{tabular}{|c|ccc|}
		\toprule
		\toprule
		{\small $\lambda$}  & 51.2\% &  40.9\% & 32.8\% \\ \toprule
		{\small AP-Lite} & \textbf{89.6$\pm${\scriptsize 0.5}} & \textbf{88.9$\pm${\scriptsize 0.6}} & \textbf{88.5$\pm${\scriptsize 0.8}} \\ \toprule
		{\small AP-Lite-SOLO} & 87.1$\pm${\scriptsize 0.7} & 86.3$\pm${\scriptsize 0.9} & 85.2$\pm${\scriptsize 1.1} \\
		{\small AP-Lite-NO-WR} & 87.5$\pm${\scriptsize 0.5} & 86.8$\pm${\scriptsize 0.8} & 85.9$\pm${\scriptsize 0.9}\\
		\bottomrule
		\bottomrule
	\end{tabular}
	\vspace{-2.3mm}
	\caption{Ablation Study: Performance Comparison (top-1 test accuracy $\pm$ standard deviation) between AP-Lite and AP-SOLO, AP-Lite-NO-WR on pruning ResNet-20 on CIFAR-10 via Global Magnitude. }
	\label{ab_1}
\end{table}

\section{Reflections}

\label{sec5}
The proposed AP aims to improve existing pruning methods by reducing dynamic DNR. The extensive experiments on popular/SOTA networks and large-scale datasets demonstrate that AP works well with various pruning methods, significantly improving the performance by 3\% - 4\%. We now conclude the paper by presenting some relevant points.

%The proposed AP aims to improve existing pruning methods by reducing dynamic DNR. The extensive experiments on various popular/SOTA networks (e.g., ResNet, Vision Transformer) and large-scale datasets (e.g., ImageNet) demonstrate that AP works well with classical and SOTA pruning methods, significantly improving the performance of existing method by 3 - 8\%. In this section,

%Specifically, AP-Lite improves the performance of existing methods by 2\% - 5\% without increasing the algorithm complexity of pruning. As for AP-Pro, it introduces an additional retraining step, but leads to a more significant improvement of 4\% - 8\%, which could be more preferred in some performance oriented tasks.

{\bf (1) Pruning Rate of AP, $q$. } AP removes $q$\% of remaining parameters in every pruning cycle, so as to reduce dynamic DNR. The value of $q$ is usually much smaller than the pruning rate of the pruning method it works with. Adjusting the value of $q$ is a trade-off between pruning less important weights and reducing dynamic DNR. A large $q$ indicates preferential reduction of dynamic DNR, while a small $q$ means preferential removal of less important weights. We conduct experiments to evaluate the effect of $q$ on the performance and results suggest that a smaller value of $q$ could lead to good performance (see {\bf Appendix} for more details).

{\bf (2) Dynamic DNR with AP. }
We also examine the effect of AP in reducing dynamic DNR. The experimental results on ResNet-20/VGG-19 suggest that AP works as expected and significantly reduce the dynamic DNR. We refer interested reader to {\bf Appendix} for more details.

%We repeat the same experiments in Section \ref{sec3.2} and compare the dynamic DNR with and without using AP-Lite in Tables \ref{dynamic_2} \& \ref{dynamic_1}. In Table \ref{dynamic_2}, we observe that the dynamic DNR is reduced from 9.8\% to 9.1\% at $\lambda = 80\% $ after applying AP-Lite with a pruning rate of 2\%. As $\lambda$ decreases, AP-Lite also works well and reduces the dynamic DNR from 5.1\% to 4.4\% at $\lambda = 20.9\%$. Similar performance trends can be observed in Table \ref{dynamic_1} (see Appendix) as well. This suggests that AP works as expected and explains why AP is able to improve the pruning performance of existing pruning methods. 

{\bf (3) Reducing Static DNR. } The Theorem \ref{thm:1} shows that, in addition to dynamic DNR, reducing static DNR also can improve the upper bound of $I(X;T)$. In fact, reducing static DNR has been incorporated directly or indirectly into the existing pruning methods. As an example, LAMP (i.e., one SOTA pruning method used in performance evaluation, see Table \ref{per_3}) takes the number of unpruned weights of neurons/layers into account and avoids pruning weights from neurons/filters with less number of unpruned weights. This prevents neurons from being statically dead. Differ from existing methods, AP is the first method targeting the dynamic DNR. Hence, as a method that works in tandem with existing pruning methods, AP improves existing pruning methods by filling in the gap in reducing dynamic DNR, leading to much better pruning performance.

\balance
\bibliography{anonymous-submission-latex-2023}

\begin{thebibliography}{51}
\providecommand{\natexlab}[1]{#1}

\bibitem[{Arnekvist et~al.(2020)Arnekvist, Carvalho, Kragic, and
  Stork}]{Stork2020}
Arnekvist, I.; Carvalho, J.~F.; Kragic, D.; and Stork, J.~A. 2020.
\newblock The effect of Target Normalization and Momentum on Dying ReLU.
\newblock \emph{CoRR}, abs/2005.06195.

\bibitem[{Blalock et~al.(2020)}]{Blalock2020}
Blalock, D.; et~al. 2020.
\newblock What is the State of Neural Network Pruning?
\newblock In \emph{Proceedings of the Machine Learning and Systems (MLSys)}.

\bibitem[{Chen, Fan, and Panda(2021)}]{chen2021crossvit}
Chen, C.-F.~R.; Fan, Q.; and Panda, R. 2021.
\newblock Crossvit: Cross-attention multi-scale vision transformer for image
  classification.
\newblock In \emph{Proceedings of the IEEE/CVF International Conference on
  Computer Vision}, 357--366.

\bibitem[{Chin et~al.(2020)Chin, Ding, Zhang, and Marculescu}]{chin2020towards}
Chin, T.-W.; Ding, R.; Zhang, C.; and Marculescu, D. 2020.
\newblock Towards efficient model compression via learned global ranking.
\newblock In \emph{Proceedings of the IEEE/CVF Conference on Computer Vision
  and Pattern Recognition (CVPR)}, 1518--1528.

\bibitem[{Cover and Thomas(2006)}]{cover2006elements}
Cover, T.~M.; and Thomas, J.~A. 2006.
\newblock \emph{Elements of Information Theory, 2nd edition}.
\newblock John Wiley \& Sons.

\bibitem[{Dai et~al.(2021)Dai, Chen, Xiao, Chen, Liu, Yuan, and
  Zhang}]{dai2021dynamic}
Dai, X.; Chen, Y.; Xiao, B.; Chen, D.; Liu, M.; Yuan, L.; and Zhang, L. 2021.
\newblock Dynamic head: Unifying object detection heads with attentions.
\newblock In \emph{Proceedings of the IEEE/CVF Conference on Computer Vision
  and Pattern Recognition}, 7373--7382.

\bibitem[{Deng et~al.(2009)}]{deng2009imagenet}
Deng, J.; et~al. 2009.
\newblock Imagenet: A large-scale hierarchical image database.
\newblock In \emph{2009 IEEE conference on computer vision and pattern
  recognition}, 248--255. IEEE.

\bibitem[{Dosovitskiy et~al.(2020)Dosovitskiy, Beyer, Kolesnikov, Weissenborn,
  Zhai, Unterthiner, Dehghani, Minderer, Heigold, Gelly
  et~al.}]{dosovitskiy2020image}
Dosovitskiy, A.; Beyer, L.; Kolesnikov, A.; Weissenborn, D.; Zhai, X.;
  Unterthiner, T.; Dehghani, M.; Minderer, M.; Heigold, G.; Gelly, S.; et~al.
  2020.
\newblock An image is worth 16x16 words: Transformers for image recognition at
  scale.
\newblock \emph{arXiv preprint arXiv:2010.11929}.

\bibitem[{Frankle and Carbin(2019)}]{frankle2018lottery}
Frankle, J.; and Carbin, M. 2019.
\newblock The Lottery Ticket Hypothesis: Finding Sparse, Trainable Neural
  Networks.
\newblock In \emph{Proceedings of the International Conference on Learning
  Representations (ICLR)}.

\bibitem[{Frankle et~al.(2019)Frankle, Dziugaite, Roy, and
  Carbin}]{frankle2019stabilizing}
Frankle, J.; Dziugaite, G.~K.; Roy, D.~M.; and Carbin, M. 2019.
\newblock Stabilizing the lottery ticket hypothesis.
\newblock \emph{arXiv preprint arXiv:1903.01611}.

\bibitem[{Frankle et~al.(2020)}]{frankle2020linear}
Frankle, J.; et~al. 2020.
\newblock Linear Mode Connectivity and the Lottery Ticket Hypothesis.
\newblock In \emph{Proceedings of the International Conference on Machine
  Learning (ICML)}, 3259--3269.

\bibitem[{Glorot, Bordes, and Bengio(2011)}]{glorot2011deep}
Glorot, X.; Bordes, A.; and Bengio, Y. 2011.
\newblock Deep sparse rectifier neural networks.
\newblock In \emph{Proceedings of the fourteenth international conference on
  artificial intelligence and statistics}, 315--323. JMLR Workshop and
  Conference Proceedings.

\bibitem[{Goodfellow, Bengio, and Courville(2016)}]{goodfellow2016deep}
Goodfellow, I.; Bengio, Y.; and Courville, A. 2016.
\newblock \emph{Deep learning}.
\newblock MIT press.

\bibitem[{Goswami, Murthy, and Das(2016)}]{goswami2016sparsity}
Goswami, S.; Murthy, C.~A.; and Das, A.~K. 2016.
\newblock Sparsity Measure of a Network Graph: Gini Index.
\newblock arXiv:1612.07074.

\bibitem[{Han et~al.(2015)}]{han2015learning}
Han, S.; et~al. 2015.
\newblock Learning both weights and connections for efficient neural network.
\newblock In \emph{Proceedings of the Advances in Neural Information Processing
  Systems (Neurips)}, 1135--1143.

\bibitem[{He et~al.(2016)}]{resnet18}
He, K.; et~al. 2016.
\newblock Deep residual learning for image recognition.
\newblock In \emph{Proceedings of the IEEE Conference on Computer Vision and
  Pattern Recognition}, 770--778.

\bibitem[{He et~al.(2020)He, Ding, Liu, Zhu, Zhang, and Yang}]{he2020learning}
He, Y.; Ding, Y.; Liu, P.; Zhu, L.; Zhang, H.; and Yang, Y. 2020.
\newblock Learning filter pruning criteria for deep convolutional neural
  networks acceleration.
\newblock In \emph{Proceedings of the IEEE/CVF conference on computer vision
  and pattern recognition}, 2009--2018.

\bibitem[{Hoyer(2004)}]{hoyer2004non}
Hoyer, P.~O. 2004.
\newblock Non-negative matrix factorization with sparseness constraints.
\newblock \emph{Journal of Machine Learning Research}, 5(9).

\bibitem[{Huang et~al.(2017)}]{huang2017densely}
Huang, G.; et~al. 2017.
\newblock Densely connected convolutional networks.
\newblock In \emph{Proceedings of the IEEE/CVF Conference on Computer Vision
  and Pattern Recognition}, 4700--4708.

\bibitem[{Hurley and Rickard(2009)}]{hurley2009comparing}
Hurley, N.~P.; and Rickard, S.~T. 2009.
\newblock Comparing Measures of Sparsity.
\newblock arXiv:0811.4706.

\bibitem[{Joseph et~al.(2021)Joseph, Khan, Khan, and
  Balasubramanian}]{joseph2021towards}
Joseph, K.; Khan, S.; Khan, F.~S.; and Balasubramanian, V.~N. 2021.
\newblock Towards open world object detection.
\newblock In \emph{Proceedings of the IEEE/CVF Conference on Computer Vision
  and Pattern Recognition}, 5830--5840.

\bibitem[{Krizhevsky et~al.(2009)}]{cifar10}
Krizhevsky, A.; et~al. 2009.
\newblock Learning multiple layers of features from tiny images.

\bibitem[{LeCun et~al.(1998)}]{lecun1998gradient}
LeCun, Y.; et~al. 1998.
\newblock Gradient-based learning applied to document recognition.
\newblock \emph{Proceedings of the IEEE}, 86(11): 2278--2324.

\bibitem[{Lee et~al.(2020)Lee, Park, Mo, Ahn, and Shin}]{lee2020layer}
Lee, J.; Park, S.; Mo, S.; Ahn, S.; and Shin, J. 2020.
\newblock Layer-adaptive Sparsity for the Magnitude-based Pruning.
\newblock In \emph{International Conference on Learning Representations
  (ICLR)}.

\bibitem[{Li et~al.(2020{\natexlab{a}})Li, Wu, Su, and Wang}]{li2020eagleeye}
Li, B.; Wu, B.; Su, J.; and Wang, G. 2020{\natexlab{a}}.
\newblock Eagleeye: Fast sub-net evaluation for efficient neural network
  pruning.
\newblock In \emph{European conference on computer vision}, 639--654. Springer.

\bibitem[{Li et~al.(2017)}]{li2016pruning}
Li, H.; et~al. 2017.
\newblock Pruning filters for efficient convnets.
\newblock In \emph{Proceedings of the International Conference on Learning
  Representations (ICLR)}.

\bibitem[{Li et~al.(2020{\natexlab{b}})Li, Wu, Liu, Zhang, Zhang, Yao, and
  Yin}]{li2020weight}
Li, Y.; Wu, W.; Liu, Z.; Zhang, C.; Zhang, X.; Yao, H.; and Yin, B.
  2020{\natexlab{b}}.
\newblock Weight-Dependent Gates for Differentiable Neural Network Pruning.
\newblock In \emph{European Conference on Computer Vision}, 23--37. Springer.

\bibitem[{Liu, Simonyan, and Yang(2019)}]{liu2018darts}
Liu, H.; Simonyan, K.; and Yang, Y. 2019.
\newblock {DARTS}: Differentiable architecture search.
\newblock In \emph{Proceedings of the International Conference on Learning
  Representations (ICLR)}.

\bibitem[{Liu, Tan, and Motani(2021)}]{S-Cyc}
Liu, S.; Tan, C. M.~J.; and Motani, M. 2021.
\newblock S-Cyc: {A} Learning Rate Schedule for Iterative Pruning of ReLU-based
  Networks.
\newblock \emph{CoRR}, abs/2110.08764.

\bibitem[{Liu et~al.(2021)Liu, Lin, Cao, Hu, Wei, Zhang, Lin, and
  Guo}]{liu2021swin}
Liu, Z.; Lin, Y.; Cao, Y.; Hu, H.; Wei, Y.; Zhang, Z.; Lin, S.; and Guo, B.
  2021.
\newblock Swin transformer: Hierarchical vision transformer using shifted
  windows.
\newblock In \emph{Proceedings of the IEEE/CVF International Conference on
  Computer Vision}, 10012--10022.

\bibitem[{Liu et~al.(2019)}]{liu2018rethinking}
Liu, Z.; et~al. 2019.
\newblock Rethinking the value of network pruning.
\newblock In \emph{Proceedings of the International Conference on Learning
  Representations (ICLR)}.

\bibitem[{Lu et~al.(2019)Lu, Shin, Su, and Karniadakis}]{lu2019dying}
Lu, L.; Shin, Y.; Su, Y.; and Karniadakis, G.~E. 2019.
\newblock Dying relu and initialization: Theory and numerical examples.
\newblock \emph{arXiv preprint arXiv:1903.06733}.

\bibitem[{Luo and Wu(2020)}]{luo2020neural}
Luo, J.-H.; and Wu, J. 2020.
\newblock Neural Network Pruning with Residual-Connections and Limited-Data.
\newblock In \emph{Proceedings of the IEEE/CVF Conference on Computer Vision
  and Pattern Recognition (CVPR)}, 1458--1467.

\bibitem[{Malach et~al.(2020)}]{malach2020proving}
Malach, E.; et~al. 2020.
\newblock Proving the lottery ticket hypothesis: Pruning is all you need.
\newblock In \emph{Proceedings of the International Conference on Machine
  Learning (ICML)}, 6682--6691.

\bibitem[{Mehta(2019)}]{mehta2019sparse}
Mehta, R. 2019.
\newblock Sparse Transfer Learning via Winning Lottery Tickets.
\newblock In \emph{Proceedings of the Advances in Neural Information Processing
  Systems Workshop on Learning Transferable Skills}.

\bibitem[{Park et~al.(2020)Park, Lee, Mo, and Shin}]{park2020lookahead}
Park, S.; Lee, J.; Mo, S.; and Shin, J. 2020.
\newblock Lookahead: A far-sighted alternative of magnitude-based pruning.

\bibitem[{Renda, Frankle, and Carbin(2019)}]{renda2020comparing}
Renda, A.; Frankle, J.; and Carbin, M. 2019.
\newblock Comparing rewinding and fine-tuning in neural network pruning.
\newblock In \emph{International Conference on Learning Representations
  (ICLR)}.

\bibitem[{Ruder(2016)}]{ruder2016overview}
Ruder, S. 2016.
\newblock An overview of gradient descent optimization algorithms.
\newblock \emph{arXiv preprint arXiv:1609.04747}.

\bibitem[{Sandler et~al.(2018)Sandler, Howard, Zhu, Zhmoginov, and
  Chen}]{sandler2018mobilenetv2}
Sandler, M.; Howard, A.; Zhu, M.; Zhmoginov, A.; and Chen, L.-C. 2018.
\newblock Mobilenetv2: Inverted residuals and linear bottlenecks.
\newblock In \emph{Proceedings of the IEEE conference on computer vision and
  pattern recognition}, 4510--4520.

\bibitem[{Shamir, Sabato, and Tishby(2010)}]{ib_gen}
Shamir, O.; Sabato, S.; and Tishby, N. 2010.
\newblock Learning and generalization with the information bottleneck.
\newblock \emph{Theoretical Computer Science}, 411(29): 2696--2711.

\bibitem[{Simonyan and Zisserman(2014)}]{vgg16}
Simonyan, K.; and Zisserman, A. 2014.
\newblock Very deep convolutional networks for large-scale image recognition.
\newblock \emph{arXiv preprint arXiv:1409.1556}.

\bibitem[{Theis et~al.(2018)}]{theis2018faster}
Theis, L.; et~al. 2018.
\newblock Faster gaze prediction with dense networks and fisher pruning.
\newblock \emph{arXiv preprint arXiv:1801.05787}.

\bibitem[{Tishby and Zaslavsky(2015)}]{ib_tishby_dl}
Tishby, N.; and Zaslavsky, N. 2015.
\newblock Deep Learning and the Information Bottleneck Principle.
\newblock arXiv:1503.02406.

\bibitem[{Vaswani et~al.(2017)Vaswani, Shazeer, Parmar, Uszkoreit, Jones,
  Gomez, Kaiser, and Polosukhin}]{vaswani2017attention}
Vaswani, A.; Shazeer, N.; Parmar, N.; Uszkoreit, J.; Jones, L.; Gomez, A.~N.;
  Kaiser, {\L}.; and Polosukhin, I. 2017.
\newblock Attention is all you need.
\newblock In \emph{Advances in neural information processing systems},
  5998--6008.

\bibitem[{Wang et~al.(2020)}]{wang2020pruning}
Wang, Y.; et~al. 2020.
\newblock Pruning from scratch.
\newblock In \emph{Proceedings of the AAAI Conference on Artificial
  Intelligence}, volume~34, 12273--12280.

\bibitem[{Ye et~al.(2020)}]{ye2020good}
Ye, M.; et~al. 2020.
\newblock Good subnetworks provably exist: Pruning via greedy forward
  selection.
\newblock In \emph{Proceedings of the International Conference on Machine
  Learning (ICML)}, 10820--10830. PMLR.

\bibitem[{Yu et~al.(2020)}]{yu2019playing}
Yu, H.~n.; et~al. 2020.
\newblock Playing the lottery with rewards and multiple languages: lottery
  tickets in {RL} and {NLP}.
\newblock In \emph{Proceedings of the International Conference on Learning
  Representations (ICLR)}.

\bibitem[{Yuan et~al.(2021)Yuan, Hou, Jiang, Feng, and Yan}]{yuan2021volo}
Yuan, L.; Hou, Q.; Jiang, Z.; Feng, J.; and Yan, S. 2021.
\newblock Volo: Vision outlooker for visual recognition.
\newblock \emph{arXiv preprint arXiv:2106.13112}.

\bibitem[{Zhao et~al.(2019)}]{zhao2019variational}
Zhao, C.; et~al. 2019.
\newblock Variational convolutional neural network pruning.
\newblock In \emph{Proceedings of the IEEE/CVF Conference on Computer Vision
  and Pattern Recognition (CVPR)}, 2780--2789.

\bibitem[{Zhou et~al.(2019)}]{zhou2019deconstructing}
Zhou, H.; et~al. 2019.
\newblock Deconstructing Lottery Tickets: Zeros, Signs, and the Supermask.
\newblock In \emph{Proceedings of the Advances in Neural Information Processing
  Systems (Neurips)}.

\bibitem[{Zhu and Gupta(2018)}]{zhu2017prune}
Zhu, M.; and Gupta, S. 2018.
\newblock To prune, or not to prune: exploring the efficacy of pruning for
  model compression.
\newblock In \emph{Proceedings of the International Conference on Learning
  Representations (ICLR)}.

\end{thebibliography}

\clearpage
\newpage
\appendix
\onecolumn

\section{Proofs of Theoretical Results}
In this section, we provide the proofs for theoretical results (Theorem 1 and Corollary 1) of the main paper.

\subsection{Proof of Theorem 1}

\setcounter{theorem}{0}
\begin{theorem}
	We are given the computational flow $X\xrightarrow{}T\xrightarrow{}Y$, where $T$ represents the features at some arbitrary depth within a network, represented with finite precision (e.g. float32 or float64). We only consider the subset of network configurations for which (a) the activations in $T$ are less than a threshold $\tau$ and (b) the zero-activation probability of each neuron in $T$ is upper bounded by some $p_S<1$. Let $dim(T)$ represent the dimensionality of $T$, i.e., the number of neurons at that depth. We then have,
	\begin{equation}
		I(X;T)\leq C \times dim(T) \times \left(1-S_{{\scaleto{DNR}{3pt}}}(T) - D_{{\scaleto{DNR}{3pt}}}(T)\left(1 - \frac{1}{C}\log{\frac{1-S_{{\scaleto{DNR}{3pt}}}(T)}{D_{{\scaleto{DNR}{3pt}}}(T)}} \right)\right), 
	\end{equation}
	for a finite constant $C$ that only depends on the network architecture, $\tau$ and $p_S$.
\end{theorem}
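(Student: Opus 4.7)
The plan is to bound $I(X;T)$ above by $H(T)$, decompose the entropy neuron by neuron, and then use the two conditions on $\tau$ and $p_S$ to fold the remaining pieces into the claimed form. Because $T$ is a deterministic function of $X$, $I(X;T) \leq H(T)$, and subadditivity gives $H(T) \leq \sum_{i=1}^{N} H(T_i)$ with $N = \dim(T)$. Since activations lie in $[0,\tau]$ and are stored with finite precision, each $T_i$ takes values in a set $\{0\}\cup \mathcal{V}_i$ with $|\mathcal{V}_i|\leq K$, where $K$ depends only on $\tau$ and the precision.

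For each neuron I would split on the indicator $Z_i = \mathbf{1}[T_i = 0]$ with $q_i = \Pr(T_i = 0)$, which gives $H(T_i) = H(Z_i) + H(T_i\mid Z_i) \leq H_b(q_i) + (1-q_i)\log K$. Writing $s = S_{DNR}(T)$ and $d = D_{DNR}(T)$, statically dead neurons ($q_i = 1$) contribute zero, so only the $(1-s)N$ non-statically-dead neurons matter, and for these condition (b) forces $\bar q := \frac{1}{(1-s)N}\sum_{i\notin S} q_i = d/(1-s) \leq p_S < 1$. I would then use two convexity steps: the elementary inequality $-(1-u)\log(1-u)\leq u$ (verified by differentiating the gap) gives $H_b(q_i)\leq -q_i\log q_i + q_i$, and concavity of $q\mapsto -q\log q$ via Jensen yields $\sum_{i\notin S}(-q_i\log q_i) \leq Nd\log\frac{1-s}{d}$. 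Combined with $\sum_{i\notin S}(1-q_i)\log K = N(1-s-d)\log K$ and $\sum_{i\notin S} q_i = Nd$, the sum becomes
\[
H(T) \;\leq\; Nd\log\frac{1-s}{d} \;+\; Nd \;+\; N(1-s-d)\log K.
\]

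The final step absorbs the stray $Nd$ into the $N(1-s-d)\log K$ term. Here condition (b) is essential: $d = (1-s)\bar q \leq p_S(1-s)$ together with $1-s-d \geq (1-s)(1-p_S)$ gives $d \leq \frac{p_S}{1-p_S}(1-s-d)$, so with $C := \log K + \frac{p_S}{1-p_S}$ one obtains $Nd + N(1-s-d)\log K \leq CN(1-s-d)$, and since $C$ depends only on the architecture (through the precision), $\tau$, and $p_S$, rearranging $CN(1-s-d) + Nd\log\frac{1-s}{d}$ into $CN\bigl[1-s-d\bigl(1-\tfrac{1}{C}\log\tfrac{1-s}{d}\bigr)\bigr]$ produces the stated inequality. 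The main obstacle I anticipate is precisely this final trade-off: without the uniform gap $1-p_S$ the $Nd$ term cannot be controlled in terms of $1-s-d$, and one must also be careful to interpret condition (b) as applying only to the non-statically-dead neurons, since statically dead ones have $q_i = 1$ and would trivially violate $q_i\leq p_S$.
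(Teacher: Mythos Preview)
Your proposal is correct and follows essentially the same route as the paper: reduce $I(X;T)$ to $\sum_i H(T_i)$ over the non-statically-dead neurons, split each $H(T_i)$ on the zero event, bound the nonzero part by a constant determined by $\tau$ and the precision, and apply Jensen to the $-q\log q$ terms. The only cosmetic difference is in how the nuisance term $-(1-q_i)\log(1-q_i)$ is handled---the paper folds it directly into its definition of $C$ (taken as a supremum over admissible configurations, bounded by $\log\frac{N-1}{1-p_S}$), whereas you bound it by $q_i$ and then absorb the accumulated $Nd$ into an explicit $C=\log K+\frac{p_S}{1-p_S}$ using condition~(b); both devices exploit $p_S<1$ in the same way.
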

\begin{proof}
	First, note that due to finite precision $T$ is a discrete variable, and thus $I(X;T)=H(T)$, as $T=f(X)$ is a deterministic function of $X$, where $f$ denotes the function within the network that maps $X$ to $T$. Next, let us only consider the nodes in $T$ which are not statically dead; i.e. they do not form a part of the $S_{{\scaleto{DNR}{3pt}}}(T)$. Let us denote them as active nodes. Note that there will be $k=dim(T)\times (1-S_{{\scaleto{DNR}{3pt}}}(T))$ active nodes in this case. 
	
	For these $k$ nodes, let $p_1,p_2,...,p_k$ denote the probability that each node will be zero-valued, when $X$ is drawn infinitely over the entire distribution $P$. Let us also denote $D'_{\scaleto{DNR}{3pt}}(T)=\frac{D_{{\scaleto{DNR}{3pt}}}(T)}{1-S_{{\scaleto{DNR}{3pt}}}(T)}$ as the cardinality adjusted dynamic DNR rate of the pruned network. Note that $\mathbb{E}[p_i]=D'_{\scaleto{DNR}{3pt}}(T)$. Let us represent these $k$ nodes by $T_1,T_2,..,T_k$ for what follows. Note that like $T$, each $T_i$ will be discrete valued. We can thus write 
	\begin{equation}
		H(T)\leq \sum H(T_i) 
	\end{equation}
\end{proof}
As all activations are less than $\tau$, if the precision of representation is $\alpha$, we will have a maximum $N = \frac{\tau}{\alpha}$ number of possible outcomes for each $T_i$. Let $\phi^i_0,\phi^i_1,...,\phi^i_{N-1}$ thus represent the probabilities of $T_i$ being each possible discrete outcome. We have that $\sum_j \phi^i_j = 1$. Note that $\phi^i_0=p_i$. 

Now, we can write 
\begin{equation}
	H(T_i) = p_i\log{\frac{1}{p_i}} + (1-p_i)\sum_{N-1\geq j\geq1} \frac{\phi^i_j}{(1-p_i)} \log{\frac{1}{\phi^i_j}}
\end{equation}

Here let us consider the quantity  $\sum_{N-1\geq j\geq1} \frac{\phi^i_j}{(1-p_i)} \log{\frac{1}{\phi^i_j}}$. Let $C$ be the maximum possible value this quantity can take, across all network weight configurations that obey the constraints provided in the Theorem. Note that $C$ will only depend on the network architecture, and the parameters $\tau$ and $p_S$. We will now demonstrate that $C$ is finite, and provide an upper bound for the same. 

Given $p_i=\phi^i_0$, we have that $\sum_{j\geq1}\phi^i_j=1-p_i$. Thus, under this constant summation constraint, the quantity  $\sum_{N-1\geq j\geq1} \frac{\phi^i_j}{(1-p_i)} \log{\frac{1}{\phi^i_j}}$ will only be maximized when $\phi^i_1=\phi^i_2=...\phi^i_{N-1}=\frac{1-p_i}{N-1}$. Thus, we have 
\begin{align}
	C\leq& \sum_{N-1\geq j\geq1} \frac{1}{N-1} \log{\frac{N-1}{1-p_i}} \\
	=&\log{\frac{N-1}{1-p_i}}\leq\log{\frac{N-1}{1-p_S}} 
\end{align}

This shows that $C$ is finite, and depends on the network architecture, $p_S$ and $\tau$ (which affects $N$). Lastly, we have

\begin{align}
	\sum_{i=1}^{k} H(T_i) \leq& \sum_{i=1}^{k} \left (p_i\log{\frac{1}{p_i}} +  (1-p_i)C\right) \\
	=&\sum_{i=1}^{k}p_i\log{\frac{1}{p_i}} + k\times C \times(1-D'_{{\scaleto{DNR}{3pt}}}(T))\\
	\leq& k\times D'_{{\scaleto{DNR}{3pt}}}(T)\log{\frac{1}{D'_{{\scaleto{DNR}{3pt}}}(T)}}+ k\times C\times(1-D'_{{\scaleto{DNR}{3pt}}}(T)) \\
	=& C\times dim(T)\left(1-S_{{\scaleto{DNR}{3pt}}}(T)\right)\left(1-D'_{{\scaleto{DNR}{3pt}}}(T)\left(1-\frac{1}{C}\log{\frac{1}{D'_{{\scaleto{DNR}{3pt}}}(T)}}\right)\right),
\end{align}
where the last step follows from the definition of $k$. Replacing $D'_{{\scaleto{DNR}{3pt}}}(T)=\frac{D_{{\scaleto{DNR}{3pt}}}(T)}{1-S_{{\scaleto{DNR}{3pt}}}(T)}$ yields the intended result.

\subsection{Proof of Corollary 1}
\setcounter{corollary}{0}
\begin{corollary}
	The upper bound for $I(X;T)$ in Theorem \ref{thm:1} decreases in response to increase of both $D_{{\scaleto{DNR}{3pt}}}(T)$ and $S_{{\scaleto{DNR}{3pt}}}(T)$. 
\end{corollary}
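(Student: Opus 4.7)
The plan is to verify the two monotonicity claims by direct partial differentiation of the explicit upper bound in Theorem \ref{thm:1}. Writing $s = S_{{\scaleto{DNR}{3pt}}}(T)$ and $d = D_{{\scaleto{DNR}{3pt}}}(T)$, the right-hand side of Theorem \ref{thm:1} factors as $U(s,d) = C\cdot\dim(T)\cdot g(s,d)$, where
$$g(s,d) = 1 - s - d + \frac{d}{C}\log\frac{1-s}{d}.$$
Since $C\cdot\dim(T) > 0$, the corollary reduces to showing $\partial g/\partial s \le 0$ and $\partial g/\partial d \le 0$ over the admissible region $s \in [0,1)$, $d \in (0, 1-s)$.

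The $s$-direction is essentially a one-line computation. Differentiating through the logarithm gives
$$\frac{\partial g}{\partial s} = -1 - \frac{d}{C\,(1-s)},$$
which is strictly negative on the admissible region, settling monotonicity in $S_{{\scaleto{DNR}{3pt}}}(T)$.

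For the $d$-direction I would compute
$$\frac{\partial g}{\partial d} \;=\; \frac{1}{C}\Big(\log\frac{1-s}{d} - (C+1)\Big),$$
so the bound is non-increasing in $d$ precisely on the set $\{\,d \ge (1-s)\,e^{-(C+1)}\,\}$. This is where I expect the real work to live: the explicit bound is not globally monotone in $d$, because for extremely small $d$ the logarithmic term dominates and the derivative flips sign. To close the argument, I would invoke the explicit upper estimate $C \le \log\frac{N-1}{1-p_S}$ from the proof of Theorem \ref{thm:1}, where $N = \tau/\alpha$ is the size of the finite-precision activation alphabet. Under any realistic floating-point representation $N$ is enormous, so the threshold $(1-s)\,e^{-(C+1)}$ is vanishingly small relative to any dynamic dead-neuron rate that actually arises in a trained network. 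The main obstacle, therefore, is not the two chain-rule computations but articulating cleanly that the monotonicity claim in $D_{{\scaleto{DNR}{3pt}}}(T)$ should be read over the operating regime $d \gg (1-s)\,e^{-(C+1)}$, on which $\partial g/\partial d \le 0$ and the corollary holds.
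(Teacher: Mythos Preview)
Your derivative computations are correct and agree with the paper's: both reduce the question to whether $C \ge \log\frac{1-s}{d} - 1$. The gap is in how you try to close this condition. You invoke the \emph{upper} estimate $C \le \log\frac{N-1}{1-p_S}$ and argue that because $N$ is enormous the threshold $(1-s)e^{-(C+1)}$ is tiny. But the inequality points the wrong way: a large upper bound on $C$ says nothing about whether $C$ itself is large. Making $(1-s)e^{-(C+1)}$ small requires a \emph{lower} bound on $C$, and the estimate you cite provides none. As written, your argument neither establishes monotonicity on the full admissible region nor justifies the ``operating regime'' you fall back on; it also weakens the corollary as stated.

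The paper supplies exactly the missing lower bound, and this is the substantive step. Recall from the proof of Theorem~\ref{thm:1} that $C$ is the supremum, over all admissible weight configurations, of $\sum_{j\ge 1}\frac{\phi^i_j}{1-\phi^i_0}\log\frac{1}{\phi^i_j}$. The paper exhibits a specific admissible configuration---obtained from the given one by flipping the signs of all incoming weights to $T_i$ and shifting the bias by one precision unit---at which this quantity is at least $\log\frac{1}{p_i}$. Since $C$ dominates the value at every configuration, this yields $C \ge \log\frac{1}{p_i}$ for each node, and hence $C \ge \log\frac{1-s}{d}$. That is precisely the lower bound needed to make $\partial g/\partial d \le 0$ throughout the admissible region, with no regime caveat.
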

\begin{proof}
	It is trivial to see that increasing $S_{{\scaleto{DNR}{3pt}}}(T)$ can only decrease the upper bound. Let us denote
	\begin{equation}
		Z = C\times dim(T)\left(1-S_{{\scaleto{DNR}{3pt}}}(T)\right)\left(1-D'_{{\scaleto{DNR}{3pt}}}(T)\left(1-\frac{1}{C}\log{\frac{1}{D'_{{\scaleto{DNR}{3pt}}}(T)}}\right)\right).
	\end{equation}
	
	For simplicity of notation, let $\beta=C\times dim(T)\times(1-S_{{\scaleto{DNR}{3pt}}}(T))$. 
	For investigating how the upper bound of $I(X;T)$ (denoted as $Z$) changes with $D_{{\scaleto{DNR}{3pt}}}(T)$, we first compute the derivative of $Z$ w.r.t $D'_{{\scaleto{DNR}{3pt}}}(T)$ which yields the following expression. 
	
	\begin{align}
		\frac{dZ}{d(D'_{{\scaleto{DNR}{3pt}}}(T))} =&\beta \left(-\left(1 - \frac{1}{C}\log{\frac{1}{D'_{{\scaleto{DNR}{3pt}}}(T)}} \right)-D'_{{\scaleto{DNR}{3pt}}}(T)\left(\frac{1}{C\times D'_{{\scaleto{DNR}{3pt}}}(T)}\right)\right)\\
		=&\beta\left(-1-\frac{1}{C} + \frac{1}{C}\log{\frac{1}{D'_{{\scaleto{DNR}{3pt}}}(T)}}\right)
	\end{align}
\end{proof}

For $\frac{dZ}{d(D'_{DNR}(T))}$ to be less than or equal to $0$, we must have 
\begin{equation}
	C\geq \log{\frac{1}{D'_{{\scaleto{DNR}{3pt}}}(T)}} - 1.
\end{equation}

In what follows, we will show that $C\geq \log{\frac{1}{D'_{DNR}(T)}}$ itself. Please refer to the proof of Theorem 1 for the definitions. 

For each node output represented in $T_1,T_2,..,T_k$, we add a negative bias of $-\frac{1}{\alpha}$, and flip the sign of all the weights pointing to each of these nodes. We note that performing this change would yield that $\sum_{j=1}^{N-1}\phi^i_j=p_i$. We will then have that 
\begin{equation} \label{temp:eq}
	\sum_{N-1\geq j\geq1} \frac{\phi^i_j}{(p_i)} \log{\frac{1}{\phi^i_j}}\geq \log{\frac{1}{p_i}},
\end{equation}
where the lower bound is achieved by only filling the remaining probability $p_i$ onto a single bin. Now, as $C$ represents the maximum possible value that the quantity bounded in \eqref{temp:eq} can take, we naturally have that $C\geq \log{\frac{1}{p_i}}$ as well, for all $i$. As $\mathbb{E}[p_i]=D'_{{\scaleto{DNR}{3pt}}}(T)$, it also then follows that $C\geq \log{\frac{1}{D'_{{\scaleto{DNR}{3pt}}}(T)}}$. This proves our intended result, and yields that $\frac{dZ}{d(D'_{{\scaleto{DNR}{3pt}}}(T))}\leq0$. Therefore, given that 
$D'_{\scaleto{DNR}{3pt}}(T)=\frac{D_{{\scaleto{DNR}{3pt}}}(T)}{1-S_{{\scaleto{DNR}{3pt}}}(T)}$, a larger value of both $D_{{\scaleto{DNR}{3pt}}}$ and $S_{{\scaleto{DNR}{3pt}}}$ will lead to a larger value of $D'_{\scaleto{DNR}{3pt}}(T)$, decreasing $Z$ (i.e., the upper bound of $I(X;Z)$) and increasing the risk of {\it over-compression}. 

\newpage
\section{Supplementary Experimental Results}
In the Appendix, we show some additional experimental results. Specifically,

\begin{enumerate}
	
	\item In Section \ref{A1}, we repeat the DNR experiment using VGG-19 on the CIFAR-10 dataset with the Global Gradient pruning method. 
	
	\item In Section \ref{A2_new}, we present the implementation details used in the Section of Performance Evaluation (i.e., Section \ref{sec4}) and demonstrate the performance of AP for more values of $\lambda$.
	
	\item In Section \ref{A3_new}, we show the ablation study results using VGG-19 on CIFAR-10 with the Global Gradient pruning method. 
	
	\item In Section \ref{A2}, we evaluate the effect of pruning rate, $q$, on the pruning performance using ResNet-20 and VGG-19 on the CIFAR-10 dataset.
	
	\item In Section \ref{A3}, we evaluate the effect of AP on reducing the dynamic DNR on the pruning performance using ResNet-20 and VGG-19 on the CIFAR-10 dataset.
	
\end{enumerate}

\subsection{The dynamic DNR experiment}
\label{A1}

In this section, we repeat the DNR experiment in the Section of Activating-while-Pruning (i.e., Section \ref{sec3}) using VGG-19 on CIFAR-10 with global gradient pruning method. The hyper-parameters and the LR schedule used are from \cite{frankle2018lottery}. 
As shown in Fig. \ref{fig_2}, we observe the performance trend largely mirrors those reported in Fig. \ref{fig_1}. The dynamic DNR tends to decrease as the network is iteratively pruned (shown in Fig. \ref{fig_2} (left)), and during optimization, the network aims to reduce the dynamic DNR so as to preserve the learning ability of the pruned network.

\vspace{-2mm}\begin{figure}[!ht]
	\hspace{0mm}\begin{minipage}{0.5\textwidth}
		\begin{center}
		\includegraphics[width=0.95\linewidth]{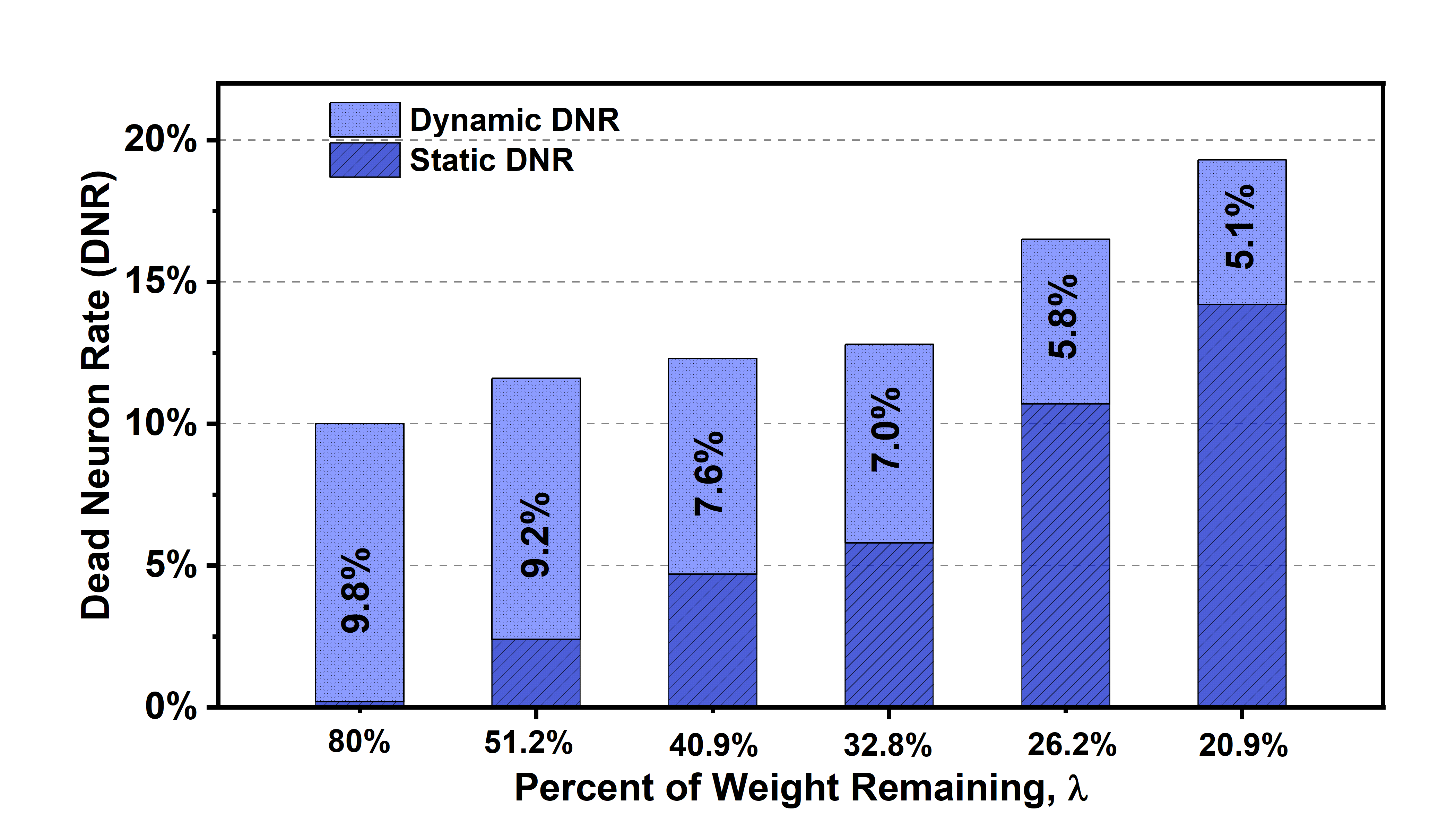}
		\end{center}
	\end{minipage}%
	\hspace{0mm}\begin{minipage}{0.5\textwidth}
		\begin{center}
		\includegraphics[width=0.95\linewidth]{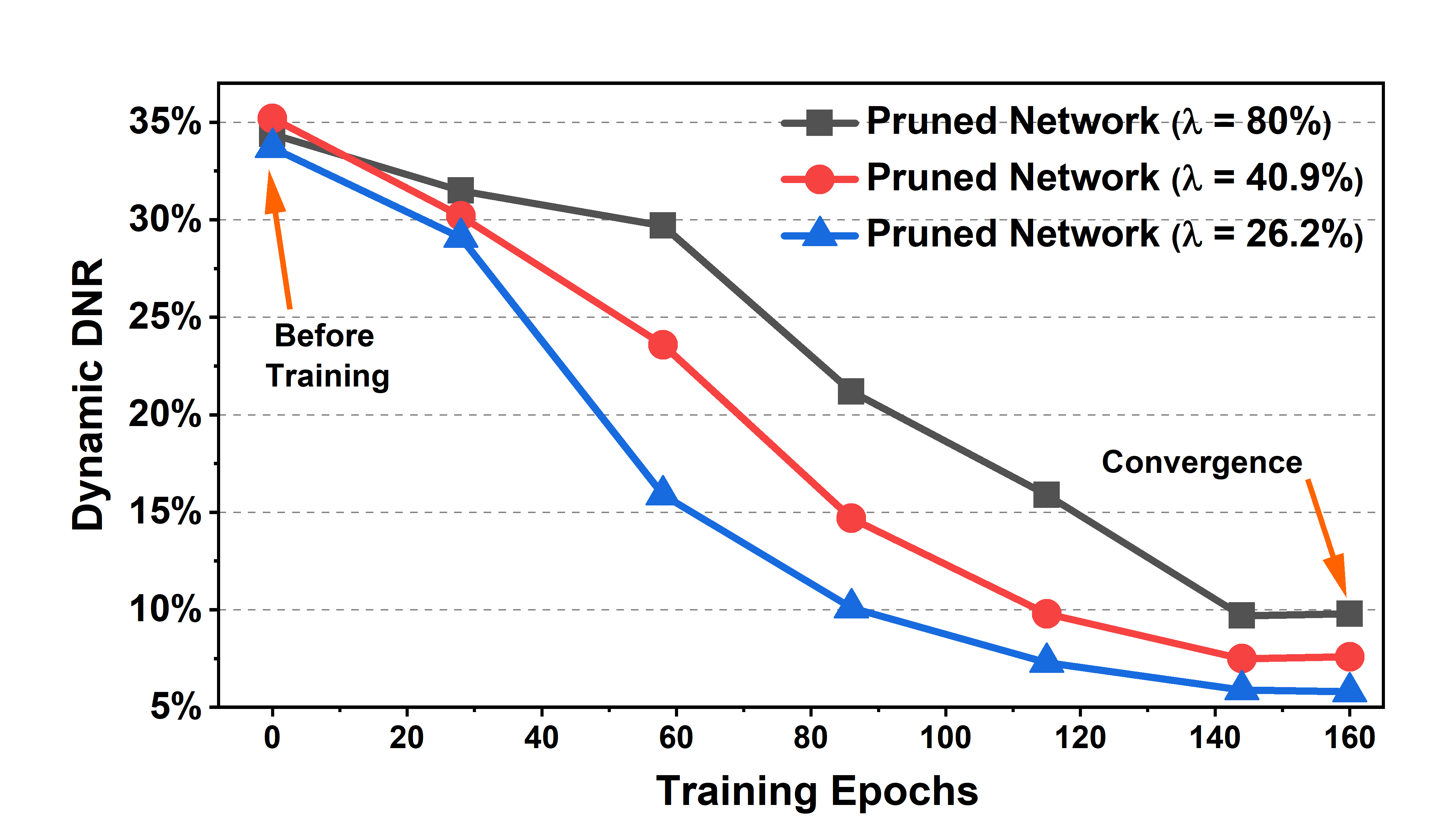}
		\end{center}
	\end{minipage}
	\vspace{-4mm}
	\caption{Dynamic and static Dead Neuron Rate (DNR) when iterative pruning VGG-19 on CIFAR-10 using Global Gradient. Left: dynamic and static DNR when the network converges; Right: dynamic DNR during optimization.} 
	\label{fig_2}
\end{figure}

\subsection{Implementation Details and Performance of AP for More Values of $\lambda$}
\label{A2_new}

In this section, we present the implementation details used in the Section of Performance Evaluation (i.e., Section \ref{sec4}) and demonstrate the performance of AP for more values of $\lambda$.

{\bf (1) Implementation Details. } We use standard implementations reported in the literature. Specifically, the implementation for Tables \ref{per_1_full} - \ref{per_2_full} is from \cite{frankle2018lottery}. The implementation for Table \ref{per_3_full} - \ref{per_6_full} are from \cite{zhao2019variational},  \cite{chin2020towards}, \cite{renda2020comparing} and \cite{dosovitskiy2020image}, respectively. The implementation details can also be found on the top row of each table (from Table \ref{per_1_full} to Table \ref{per_6_full}). Furthermore, for the IMP method examined in this work, we rewind the unpruned weights to their values during training (e.g., epoch 6), in order to obtain a more stable subnetwork \cite{frankle2019stabilizing}.
To better work with IMP, the weight rewinding step in the proposed AP also rewinds the unpruned weights to their values during training (i.e., the same epoch as IMP).

{\bf (2) More Performance of AP. } In Tables \ref{per_1_full}- \ref{per_6_full}, we also show performance of AP for more values of $\lambda$. We observe that for other values of $\lambda$, AP-Lite and AP-Pro also help to achieve higher performance. For example, in Table \ref{per_2_full}, AP-Lite improves the performance of the Global Gradient (at $\lambda$ = 8.59\%) from 84.5\% to 86.1\% while AP-Pro improves the performance to 87.0\%.

\iffalse
We now show the performance of AP using classical pruning methods (Global Magnitude/Gradient) in Tables \ref{per_1} - \ref{per_2}. We note that the implementations for Tables \ref{per_1} - \ref{per_2} are from \cite{frankle2018lottery} and the performance of classical pruning methods is comparable to those reported in the literature (e.g., compare Table \ref{per_1} and Table \ref{per_2} to Fig.9 and Fig. 11 of \cite{Blalock2020} , respectively).

As for the performance, it largely mirrors those in Table \ref{per_3} - \ref{per_6}. We observe that as the percent of weights remaining $\lambda$ decreases, the improvement of AP becomes larger. For example, in Table \ref{per_1}, the performance of AP-Lite at $\lambda = 26.2\%$ is 1.3\% higher than the benchmark result obtained using the global magnitude. The improvement increases to 2.6\% at $\lambda = 5.7\%$. Note that AP-Lite does not increase the algorithm complexity of existing methods. As for the performance of AP-Pro, it works as expected. In Table \ref{per_1}, AP-Pro leads to a more significant improvement of 2.0\% and 4.1\% at $\lambda = 26.2\%$ and $\lambda = 5.7\%$, respectively. 

Similar performance trends can also be observed in Table \ref{per_2}, where we prune VGG-19 on CIFAR-10 using the global gradient pruning method.
\fi

\begin{table}[!ht]
	\centering
	\setlength\tabcolsep{20.0pt}
	\begin{tabular}{|c|c|cc|}
		\toprule
		\toprule
		\multicolumn{4}{|c|}{(i) Params: 270K;  (ii) Train Steps: 100 Epochs; (iii) Batch size: 128; } \\
		\multicolumn{4}{|c|}{(iv) LR Schedule: warmup to 0.03 at 55 epochs, 10X drop at 55, 70 epochs.} \\ \toprule
		{\footnotesize Percent of Weights Remaining} & {\footnotesize Global Magnitude} & {\footnotesize AP-Lite} & {\footnotesize AP-Pro} \\ \toprule
		{\small  $\lambda$ = 100.0\%} &  91.7$\pm${\scriptsize 0.2} & 91.7$\pm${\scriptsize 0.2} & 91.7$\pm${\scriptsize 0.5} \\
		%{\small  $\lambda$ = 80.0\%} &  89.9$\pm${\scriptsize 0.5} & 90.1$\pm${\scriptsize 0.4} & 90.1$\pm${\scriptsize 0.4} \\
		{\small  $\lambda$ = 64.0\%} & 91.5$\pm${\scriptsize 0.3} & 91.7$\pm${\scriptsize 0.2} & \textbf{91.8$\pm${\scriptsize 0.3}}  \\
		%{\small  $\lambda$ = 51.2\%} &  88.8$\pm${\scriptsize 0.4} & 89.6$\pm${\scriptsize 0.5} & 89.9$\pm${\scriptsize 0.3} \\
		{\small  $\lambda$ = 40.9\%} &  90.8$\pm${\scriptsize 0.5} & 91.0$\pm${\scriptsize 0.6} & \textbf{91.4$\pm${\scriptsize 0.4}} \\
		{\small  $\lambda$ = 32.8\%} &  90.3$\pm${\scriptsize 0.4} & 90.4$\pm${\scriptsize 0.7} & \textbf{90.7$\pm${\scriptsize 0.6}} \\
		{\small  $\lambda$ = 26.2\%} &  89.8$\pm${\scriptsize 0.6} & 90.2$\pm${\scriptsize 0.8} & \textbf{90.4$\pm${\scriptsize 0.7}} \\
		{\small  $\lambda$ = 13.4\%} &  88.2$\pm${\scriptsize 0.7} & 88.7$\pm${\scriptsize 0.7} & \textbf{89.3$\pm${\scriptsize 0.8}} \\
		{\small  $\lambda$ = 8.59\%} &  85.9$\pm${\scriptsize 0.9} & 86.8$\pm${\scriptsize 0.9} & \textbf{87.3$\pm${\scriptsize 0.8}} \\
		{\small  $\lambda$ = 5.72\%} &  81.2$\pm${\scriptsize 1.1} & 82.4$\pm${\scriptsize 0.8} & \textbf{84.1$\pm${\scriptsize 1.1}} \\
		\bottomrule
		\bottomrule
	\end{tabular}
	\vspace{-3mm}
	\caption{Performance (top-1 test accuracy $\pm$ standard deviation) of pruning ResNet-20 on CIFAR-10 using Global Magnitude with and without AP. The hyper-parameters and the LR schedule are from \cite{frankle2018lottery}.}
	\label{per_1_full}
	\vspace{4mm}
\end{table}

\begin{table}[!ht]
	\centering
	\setlength\tabcolsep{20.7pt}
	%\setlength\extrarowheight{-0.2pt}
	%\vspace*{-\baselineskip}
	\begin{tabular}{|c|c|cc|}
		\toprule
		\toprule
		\multicolumn{4}{|c|}{(i) Params: 139M; (ii) Train Steps: 160 epochs; (iii) Batch size: 64; } \\
		\multicolumn{4}{|c|}{(iv) LR Schedule: warmup to 0.1 at 15 epochs, 10X drop at 85, 125 epochs.} \\ \toprule
		{\footnotesize Percent of Weights Remaining} & {\footnotesize Global Gradient} & {\footnotesize AP-Lite} & {\footnotesize AP-Pro} \\ \toprule
		{\small  $\lambda$ = 100.0\%} &  92.2$\pm${\scriptsize 0.3} & 92.2$\pm${\scriptsize 0.3} & 92.2$\pm${\scriptsize 0.3} \\
		{\small  $\lambda$ = 64.0\%} &  91.3$\pm${\scriptsize 0.2} & 91.5$\pm${\scriptsize 0.3} & \textbf{91.9$\pm${\scriptsize 0.3}}  \\
		{\small  $\lambda$ = 40.9\%} &  90.6$\pm${\scriptsize 0.4} & 90.8$\pm${\scriptsize 0.5} & \textbf{91.1$\pm${\scriptsize 0.7}} \\
		{\small  $\lambda$ = 32.8\%} &  90.2$\pm${\scriptsize 0.5} & 90.5$\pm${\scriptsize 0.8} & \textbf{90.8$\pm${\scriptsize 0.6}} \\
		{\small  $\lambda$ = 26.2\%} &  89.8$\pm${\scriptsize 0.8} & 90.3$\pm${\scriptsize 0.7} & \textbf{90.7$\pm${\scriptsize 0.9}} \\
		{\small  $\lambda$ = 13.4\%} &  89.2$\pm${\scriptsize 0.8} & 89.7$\pm${\scriptsize 0.9} & \textbf{90.4$\pm${\scriptsize 0.8}} \\
		{\small  $\lambda$ = 8.59\%} &  84.5$\pm${\scriptsize 0.9} & 86.1$\pm${\scriptsize 1.0} & \textbf{87.0$\pm${\scriptsize 0.7}} \\
		{\small  $\lambda$ = 5.72\%} &  76.9$\pm${\scriptsize 1.1} & 78.4$\pm${\scriptsize 1.4} & \textbf{79.2$\pm${\scriptsize 1.3}} \\
		\bottomrule
		\bottomrule
	\end{tabular}
    \vspace{-3mm}
	\caption{Performance (top-1 test accuracy $\pm$ standard deviation) of pruning VGG-19 on CIFAR-10 using Global Gradient with and without AP.  The hyper-parameters and the LR schedule are from \cite{frankle2018lottery}.}
	\label{per_2_full}
	\vspace{4mm}
\end{table}

\begin{table}[!ht]
	\centering
	\renewcommand{\arraystretch}{1.0}
	\setlength\tabcolsep{23.7pt}
	\begin{tabular}{|c|c|cc|}
		\toprule
		\toprule
		\multicolumn{4}{|c|}{(i) Params: 1.1M;  (ii) Train Steps: 300 epochs; (iii) Batch size: 256; } \\
		\multicolumn{4}{|c|}{(iv) LR Schedule: warmup to 0.1 at 150 epochs, 10X drop at 150, 240 epochs.} \\ \toprule
		{\footnotesize Percent of Weights Remaining} & {\footnotesize LAMP} & {\footnotesize AP-Lite} & {\footnotesize AP-Pro} \\ \toprule
		{\small  $\lambda$ = 100.0\%} &  74.6$\pm${\scriptsize 0.5} & 74.6$\pm${\scriptsize 0.5} & 74.6$\pm${\scriptsize 0.5} \\
		%{\small  $\lambda$ = 80.0\%} &  70.7$\pm${\scriptsize 0.4} & 70.9$\pm${\scriptsize 0.4} & 70.9$\pm${\scriptsize 0.4} \\
		{\small  $\lambda$ = 64.0\%} &  73.4$\pm${\scriptsize 0.6} & 73.7$\pm${\scriptsize 0.5} & \textbf{74.2$\pm${\scriptsize 0.6}}  \\
		%{\small  $\lambda$ = 51.2\%} &  69.9$\pm${\scriptsize 0.5} & 70.5$\pm${\scriptsize 0.6} & 70.9$\pm${\scriptsize 0.8} \\
		%{\small  $\lambda$ = 40.9\%} &  69.2$\pm${\scriptsize 0.8} & 70.2$\pm${\scriptsize 0.7} & \textbf{70.6$\pm${\scriptsize 0.5}} \\
		{\small  $\lambda$ = 32.8\%} &  71.5$\pm${\scriptsize 0.7} & 71.9$\pm${\scriptsize 0.8} & \textbf{72.2$\pm${\scriptsize 0.7}} \\
		{\small  $\lambda$ = 26.2\%} &  69.6$\pm${\scriptsize 0.8} & 70.3$\pm${\scriptsize 0.7} & \textbf{71.1$\pm${\scriptsize 0.7}} \\
		{\small  $\lambda$ = 13.4\%} &  65.8$\pm${\scriptsize 0.9} & 66.6$\pm${\scriptsize 0.7} & \textbf{68.8$\pm${\scriptsize 0.9}} \\
		%{\small  $\lambda$ = 8.59\%} &  59.4$\pm${\scriptsize 1.5} & 61.2$\pm${\scriptsize 1.1} & \textbf{63.1$\pm${\scriptsize 1.4}} \\
		{\small  $\lambda$ = 5.72\%} &  61.2$\pm${\scriptsize 1.4} & 62.2$\pm${\scriptsize 1.2} & \textbf{63.5$\pm${\scriptsize 1.5}} \\
		\bottomrule
		\bottomrule
	\end{tabular}
	\vspace{-3mm}
	\caption{Performance (top-1 test accuracy $\pm$ standard deviation) of pruning DenseNet-40 on CIFAR-100 using Layer-Adaptive Magnitude Pruning (LAMP) \cite{lee2020layer} with/without AP. The hyper-parameters and the LR schedule are from \cite{zhao2019variational}.}
	\label{per_3_full}
	\vspace{4mm}
\end{table}

\begin{table}[!ht]
	\centering
	\renewcommand{\arraystretch}{0.9}
	\setlength\tabcolsep{23.7pt}
	\begin{tabular}{|c|c|cc|}
		\toprule
		\toprule
		\multicolumn{4}{|c|}{(i) Params: 2.36M; (ii) Train Steps: 200 epochs; (iii) Batch size: 128;} \\
		\multicolumn{4}{|c|}{(iv) LR Schedule: warmup to 0.1 at 60 epochs, 10X drop at 60, 120, 160 epochs.} \\ \toprule
		{\footnotesize Percent of Weights Remaining} & {\footnotesize LAP} & {\footnotesize AP-Lite} & {\footnotesize AP-Pro} \\ \toprule
		{\small  $\lambda$ = 100.0\%} &  73.7$\pm${\scriptsize 0.4} & 73.7$\pm${\scriptsize 0.4}& 73.7$\pm${\scriptsize 0.4} \\
		%{\small  $\lambda$ = 80.0\%} &  70.6$\pm${\scriptsize 0.3} & 70.9$\pm${\scriptsize 0.4} & 70.9$\pm${\scriptsize 0.4} \\
		{\small  $\lambda$ = 64.0\%} &  72.5$\pm${\scriptsize 0.4} & 72.7$\pm${\scriptsize 0.3} & \textbf{72.9$\pm${\scriptsize 0.5}}  \\
		%{\small  $\lambda$ = 51.2\%} &  69.5$\pm${\scriptsize 0.6} & 70.1$\pm${\scriptsize 0.6} & 70.5$\pm${\scriptsize 0.7} \\
		%{\small  $\lambda$ = 40.9\%} &  68.9$\pm${\scriptsize 0.7} & 69.7$\pm${\scriptsize 0.8} & 70.3$\pm${\scriptsize 0.6} \\
		{\small  $\lambda$ = 32.8\%} &  72.1$\pm${\scriptsize 0.8} & 72.5$\pm${\scriptsize 0.9} & \textbf{72.8$\pm${\scriptsize 0.7}} \\
		{\small  $\lambda$ = 26.2\%} &  70.5$\pm${\scriptsize 0.9} & 70.9$\pm${\scriptsize 0.8} & \textbf{71.4$\pm${\scriptsize 0.8}} \\
		{\small  $\lambda$ = 13.4\%} &  67.3$\pm${\scriptsize 0.8} & 68.2$\pm${\scriptsize 1.2} & \textbf{69.1$\pm${\scriptsize 0.8}} \\
		%{\small  $\lambda$ = 8.59\%} &  59.8$\pm${\scriptsize 1.3} & 62.1$\pm${\scriptsize 1.4} & \textbf{64.4$\pm${\scriptsize 1.5}} \\
		{\small  $\lambda$ = 5.72\%} &  64.8$\pm${\scriptsize 1.5} & 66.2$\pm${\scriptsize 1.5} & \textbf{67.4$\pm${\scriptsize 1.1}} \\
		\bottomrule
		\bottomrule
	\end{tabular}
	\vspace{-3mm}
	\caption{Performance (top-1 test accuracy $\pm$ standard deviation) of pruning MobileNetV2 on CIFAR-100 using  Lookahead Pruning (LAP) \cite{park2020lookahead} with/without AP. The hyper-parameters and the LR schedule are from \cite{chin2020towards}.}
	\label{per_4_full}
\end{table}

\begin{table}[!ht]
	\centering
	\renewcommand{\arraystretch}{0.9}
	\setlength\tabcolsep{23.4pt}
	\begin{tabular}{|c|c|cc|}
		\toprule
		\toprule
		\multicolumn{4}{|c|}{(i) Params: 25.5M; (ii) Train Steps: 90 epochs; (iii) Batch size: 1024;} \\
		\multicolumn{4}{|c|}{(iv) LR Schedule: warmup to 0.4 at 5 epochs, 10X drop at 30, 60, 80 epochs.} \\ \toprule
		{\footnotesize Percent of Weights Remaining} & {\footnotesize IMP} & {\footnotesize AP-Lite} & {\footnotesize AP-Pro} \\ \toprule
		{\small  $\lambda$ = 100.0\%} &  77.0$\pm${\scriptsize 0.1} & 77.0$\pm${\scriptsize 0.1} & 77.0$\pm${\scriptsize 0.1} \\
		%{\small  $\lambda$ = 80.0\%} &  57.1$\pm${\scriptsize 0.2} & 57.5$\pm${\scriptsize 0.3} & 57.5$\pm${\scriptsize 0.3} \\
		{\small  $\lambda$ = 64.0\%} &  77.2$\pm${\scriptsize 0.2} & 77.5$\pm${\scriptsize 0.1} & \textbf{77.7$\pm${\scriptsize 0.1}}  \\
		%{\small  $\lambda$ = 51.2\%} &  56.4$\pm${\scriptsize 0.6} & 56.8$\pm${\scriptsize 0.4} & 57.1$\pm${\scriptsize 0.7} \\
		%{\small  $\lambda$ = 40.9\%} &  72.3$\pm${\scriptsize 0.7} & 72.8$\pm${\scriptsize 0.7} & 73.1$\pm${\scriptsize 0.9} \\
		{\small  $\lambda$ = 32.8\%} &  76.8$\pm${\scriptsize 0.2} & 77.2$\pm${\scriptsize 0.3} & \textbf{77.5$\pm${\scriptsize 0.4}} \\
		{\small  $\lambda$ = 26.2\%} &  76.4$\pm${\scriptsize 0.3} & 76.9$\pm${\scriptsize 0.4} & \textbf{77.2$\pm${\scriptsize 0.3}} \\
		{\small  $\lambda$ = 13.4\%} &  75.2$\pm${\scriptsize 0.4} & 76.1$\pm${\scriptsize 0.3} & \textbf{76.8$\pm${\scriptsize 0.6}} \\
		{\small  $\lambda$ = 8.59\%} &  73.8$\pm${\scriptsize 0.5} & 75.2$\pm${\scriptsize 0.7} & \textbf{75.9$\pm${\scriptsize 0.5}} \\
		{\small  $\lambda$ = 5.72\%} &  71.5$\pm${\scriptsize 0.4} & 72.6$\pm${\scriptsize 0.5} & \textbf{73.5$\pm${\scriptsize 0.4}} \\
		\bottomrule
		\bottomrule
	\end{tabular}
	\vspace{-3mm}
	\caption{Performance (top-1 test accuracy $\pm$ standard deviation) of pruning ResNet-50 on ImageNet using Iterative Magnitude Pruning (IMP) with and without AP \cite{frankle2018lottery}. The hyper-parameters and the LR schedule are from \cite{renda2020comparing}.}
	\label{per_5_full}
\end{table}

\begin{table}[!ht]
	\centering
	\renewcommand{\arraystretch}{1.0}
	\setlength\tabcolsep{23.4pt}
	\begin{tabular}{|c|c|cc|}
		\toprule
		\toprule
		\multicolumn{4}{|c|}{(i) Params: 86M; (ii) Train Steps: 50 epochs; (iii) Batch size: 1024;} \\
		\multicolumn{4}{|c|}{(iv) Optimizer: Adam; (v) LR Schedule: cosine decay from 1e-4.} \\ \toprule
		{\footnotesize Percent of Weights Remaining} & {\footnotesize IMP} & {\footnotesize AP-Lite} & {\footnotesize AP-Pro} \\ \toprule
		{\small  $\lambda$ = 100.0\%} &  98.0$\pm${\scriptsize 0.3} & 98.0$\pm${\scriptsize 0.3} & 98.0$\pm${\scriptsize 0.3} \\
		%{\small  $\lambda$ = 80.0\%} &  57.1$\pm${\scriptsize 0.2} & 57.5$\pm${\scriptsize 0.3} & 57.5$\pm${\scriptsize 0.3} \\
		{\small  $\lambda$ = 64.0\%} &  98.4$\pm${\scriptsize 0.3} & 98.5$\pm${\scriptsize 0.2} & \textbf{98.7$\pm${\scriptsize 0.3}}  \\
		%{\small  $\lambda$ = 51.2\%} &  56.4$\pm${\scriptsize 0.6} & 56.8$\pm${\scriptsize 0.4} & 57.1$\pm${\scriptsize 0.7} \\
		%{\small  $\lambda$ = 40.9\%} &  72.3$\pm${\scriptsize 0.7} & 72.8$\pm${\scriptsize 0.7} & 73.1$\pm${\scriptsize 0.9} \\
		{\small  $\lambda$ = 32.8\%} &  97.3$\pm${\scriptsize 0.6} & 98.0$\pm${\scriptsize 0.4} & \textbf{98.2$\pm${\scriptsize 0.6}} \\
		{\small  $\lambda$ = 26.2\%} &  96.8$\pm${\scriptsize 0.7} & 97.3$\pm${\scriptsize 0.7} & \textbf{97.6$\pm${\scriptsize 0.5}} \\
		{\small  $\lambda$ = 13.4\%} &  88.1$\pm${\scriptsize 0.9} & 89.9$\pm${\scriptsize 0.6} & \textbf{91.1$\pm${\scriptsize 0.8}} \\
		{\small  $\lambda$ = 8.59\%} &  84.4$\pm${\scriptsize 0.8} & 85.5$\pm${\scriptsize 0.8} & \textbf{87.4$\pm${\scriptsize 0.7}} \\
		{\small  $\lambda$ = 5.72\%} &  82.1$\pm${\scriptsize 0.9} & 83.6$\pm${\scriptsize 0.8} & \textbf{84.8$\pm${\scriptsize 1.0}} \\
		\bottomrule
		\bottomrule
	\end{tabular}
	\vspace{-3mm}
	\caption{Performance (top-1 test accuracy $\pm$ standard deviation) of pruning Vision Transformer \cite{dosovitskiy2020image} (ViT-B-16) on CIFAR-10 using IMP with and without AP \cite{frankle2018lottery}. The hyper-parameters and the LR schedule are from \cite{dosovitskiy2020image}.}
	\label{per_6_full}
\end{table}

\clearpage
\newpage
\subsection{Ablation Study Results using VGG-19}
\label{A3_new}

In this section, we repeat the ablation study experiment in the Section of Ablation Study (i.e., Section \ref{sec4.3}) using VGG-19 on CIFAR-10 with the Global Gradient method. We note that the hyper-parameters and the LR schedule used are from \cite{frankle2018lottery}. 

As show in Table \ref{ab_2}, we observe similar performance trends as Table \ref{ab_1}. Specifically, the performance of AP-Lite-NO-WR is much lower as compared to AP-Lite, suggesting the effectiveness of weight rewinding (WR). Furthermore, when AP works solely, the performance (i.e., AP-Lite-SOLO) tends to become much worse than AP-Lite. This agrees with our argument that AP does not work solely as it does not evaluate the importance of weights. Instead, the goal of AP is to work in tandem with existing pruning methods and aims to improve their performance by reducing the dynamic DNR. 

\begin{table}[!ht]
	\centering
	\setlength\tabcolsep{9.0pt}
	\begin{tabular}{|c|cccc|}
		\toprule
		\toprule
		{\small $\lambda$} & 64\% & 51.2\% &  40.9\% & 32.8\% \\ \toprule
		{\small AP-Lite} & \textbf{91.7 $\pm$ 0.3} & \textbf{90.6 $\pm$ 0.5} & \textbf{89.8 $\pm$ 0.9} & \textbf{89.5 $\pm$ 0.8} \\ \toprule
		{\small AP-Lite-SOLO} & 90.5 $\pm$ 0.7 & 89.8 $\pm$ 0.6  & 88.3 $\pm$ 1.1 & 87.2 $\pm$ 1.3 \\
		{\small AP-Lite-NO-WR} & 90.8 $\pm$ 0.4 & 90.1 $\pm$ 0.3 & 88.5 $\pm$ 0.9 & 87.7 $\pm$ 1.0 \\
		\bottomrule
		\bottomrule
	\end{tabular}
	\vspace{-3mm}
	\caption{Ablation Study: Performance Comparison (top-1 test accuracy $\pm$ standard deviation) between AP-Lite and AP-SOLO, AP-Lite-NO-WR on pruning VGG-19 using the CIFAR-10 dataset via Global Gradient. The hyper-parameters and the LR schedule are from \cite{frankle2018lottery}.}
	\label{ab_2}
\end{table}

\subsection{Pruning Rate of AP, $q$}
\label{A2}

%we repeat the experiment in Section \ref{sec5} (i.e., the effect of pruning rate, $q$) using the VGG-19 on CIFAR-10 with the global gradient pruning method. The hyper-parameters and the LR schedule used are from \cite{frankle2018lottery}. 

In this subsection, we repeat the experiments of pruning ResNet-20 on CIFAR-10 using Global Magnitude and AP-Lite. We note that the overall pruning rate is fixed as 20\% and the pruning rate of AP increases from 1\% to 5\%. Correspondingly, the pruning rate of Global Magnitude decreases from 19\% to 15\%. The experimental results are summarized in Table \ref{pr_1}. We observe that as we increase the pruning rate of AP from 2\%, the performance tends to decrease. Similar performance trends can be observed using VGG-19 on CIFAR-10 as well (see Table \ref{pr_2}). As an example, in Table \ref{pr_2}, the top-1 test accuracy is reduced from 88.5 to 87.1 for $\lambda = 26.2\%$ when the value of $q$ increases from 2\% to 5\%. It suggests that the primary goal of pruning still should be pruning less important weights.

The theoretical determination of the optimal value of $q$ is clearly worth deeper thought. Alternatively, $q$ can be thought of as a hyper-parameter and tuned via the validation dataset and let $q$ = 2 could be a good choice as it provides promising results in various experiments.

%As shown in Table \ref{pr_2}, the performance largely mirror those in Table \ref{pr_1}. As we increase the value of $q$, the performance tends to decrease significantly. As an example, the top-1 test accuracy is reduced from 88.5 to 87.1 for $\lambda = 26.2\%$ when the value of $q$ increases from 2\% to 5\%. It suggests that the primary goal of pruning still should be pruning less important weights.

%Specifically, without AP, the pruning rate is $p$ (i.e., $p\%$ of remaining weights are pruned). After using AP, $(p-q)\%$ of remaining weights are pruned according to the benchmark method while $q\%$ of remaining weights are pruned according to AP. Therefore, adjusting the value of $q$ is a trade-off between pruning less important weights and reducing dynamic DNR.

\begin{table}[!ht]
	\centering
	\setlength\tabcolsep{9.0pt}
	\begin{tabular}{|c|cccc|}
		\toprule
		\toprule
		{AP Pruning Rate, $q$} & 1\% & 2\% &  3\% & 5\% \\ \toprule
		{\small AP-Lite ($\lambda$ = 64.0\%)} & 89.8 $\pm$ 0.1 & \textbf{90.0 $\pm$ 0.2} & 89.5 $\pm$ 0.4 & 89.2 $\pm$ 0.6 \\
		
		{\small AP-Lite ($\lambda$ = 40.9\%)} & 88.2 $\pm$ 0.4 & \textbf{88.9 $\pm$ 0.6} & 88.5 $\pm$ 0.7 & 87.3 $\pm$ 0.5 \\
		
		{\small AP-Lite ($\lambda$ = 26.2\%)} & 87.1 $\pm$ 0.5 & \textbf{87.9 $\pm$ 0.8} & 86.7 $\pm$ 0.8 & 86.3 $\pm$ 0.9 \\
		\bottomrule
		\bottomrule
	\end{tabular}
\vspace{-2mm}
	\caption{Performance (top-1 test accuracy $\pm$ standard deviation) of AP-Lite when iterative pruning ResNet-20 on CIFAR-10 with different pruning rate.}
	\label{pr_1}
\end{table}

\begin{table}[!ht]
	\centering
	\setlength\tabcolsep{9.0pt}
	\begin{tabular}{|c|cccc|}
		\toprule
		\toprule
		{AP Pruning Rate, $q$} & 1\% & 2\% &  3\% &  5\% \\ \toprule
		{\small AP-Lite ($\lambda$ = 64.0\%)} & 91.1 $\pm$ 0.2 & \textbf{91.7 $\pm$ 0.3} & 90.2 $\pm$ 0.5 & 89.8 $\pm$ 0.6 \\
		
		{\small AP-Lite ($\lambda$ = 40.9\%)} & 88.7 $\pm$ 0.5 & \textbf{89.8 $\pm$ 0.9} & 89.3 $\pm$ 1.1 & 88.0 $\pm$ 0.9 \\
		
		{\small AP-Lite ($\lambda$ = 26.2\%)} & 87.9 $\pm$ 0.8 & \textbf{88.5 $\pm$ 0.7} & 88.1 $\pm$ 1.3 & 87.1 $\pm$ 1.3 \\
		\bottomrule
		\bottomrule
	\end{tabular}
	\vspace{-3mm}
	\caption{Performance (top-1 test accuracy $\pm$ standard deviation) of AP-Lite when iterative pruning VGG-19 on CIFAR-10 with different pruning rate.}
	\label{pr_2}
	
\end{table}

\newpage
\subsection{Effect of AP on Dynamic DNR}
\label{A3}

In this section, we repeat the same experiments in the Section of Performance Evaluation (i.e., Section \ref{sec4}) and compare the dynamic DNR with and without using AP-Lite in Tables \ref{dynamic_2} \& \ref{dynamic_1}.

In Table \ref{dynamic_2}, we observe that dynamic DNR is reduced from 9.8\% to 9.1\% at $\lambda = 80\% $ after applying AP-Lite with a pruning rate of 2\%. As $\lambda$ decreases, AP-Lite also works well and reduces dynamic DNR from 5.1\% to 4.4\% at $\lambda = 20.9\%$. Similar performance trends can also be observed in Table \ref{dynamic_1}. This suggests that AP works as expected and explains why AP is able to improve the pruning performance of existing pruning methods.

%we repeat the experiment in Section \ref{sec5} (i.e., the effect of AP on dynamic DNR) using ResNet-20 on the CIFAR-10 dataset with the global magnitude pruning method. Note that we utilize the standard implementation details (e.g., optimizer and learning rate schedules) from \cite{frankle2018lottery}.

%In Table \ref{dynamic_1}, we observe similar performance trends as Table \ref{dynamic_2}. Specifically, AP works as expected and significantly reduces the dynamic DNR. As an example, the dynamic DNR is reduced from 6.1\% to 5.7\% at $\lambda = 20.9\% $, with an improvement of 7\%. This further verifies the effect of AP on reducing dynamic DNR using another network and pruning methods.

\begin{table}[!ht]
	\centering
	\setlength\tabcolsep{9.3pt}
	\begin{tabular}{|c|cccccc|}
		\toprule
		\toprule
		{\small $\lambda$} & 80\% & 51.2\% &  40.9\% & 32.8\% & 26.2\% & 20.9\% \\ \toprule
		
		{\small Global Magnitude} & 7.7\% & 7.4\% & 7.0\% & 6.7\% & 6.3\% & 6.1\% \\ \toprule
		
		{\small AP-Lite (2\%)} & \bf 7.3\% & \bf 7.1\% & \bf 6.8\% & \bf 6.3\% & \bf 5.9\% & \bf 5.5\% \\
		\bottomrule
		\bottomrule
	\end{tabular}
	\vspace{-3mm}
	\caption{The dynamic DNR when iteratively pruning ResNet-20 on CIFAR-10 using Global Magnitude and AP-Lite (with a pruning rate of 2\%). }
	\label{dynamic_1}
\end{table}

\begin{table}[!ht]
	\centering
	\setlength\tabcolsep{9.85pt}
	\vspace*{-\baselineskip}
	\begin{tabular}{|c|cccccc|}
		\toprule
		\toprule
		{\small $\lambda$} & 80\% & 51.2\% &  40.9\% & 32.8\% & 26.2\% & 20.9\% \\ \toprule
		
		{\small Global Gradient} & 9.8\% & 9.2\% & 7.6\% & 7.0\% & 5.8\% &  5.1\%  \\ \toprule
		
		{\small AP-Lite (2\%)} & \bf 9.1\% & \bf 8.6\% & \bf 7.0\% & \bf 6.1\% & \bf 4.9\% & \bf 4.4\% \\
		\bottomrule
		\bottomrule
	\end{tabular}
\vspace{-2mm}
	\caption{The dynamic DNR when iteratively pruning VGG-19 on CIFAR-10 using Global Gradient and AP-Lite (with a pruning rate of 2\%). }
	\label{dynamic_2}
\end{table}

\end{document}